\let\classAND\AND
\let\AND\relax
\let\AND\classAND
\DeclareMathAlphabet{\mathsfit}{\encodingdefault}{\sfdefault}{m}{sl}
\SetMathAlphabet{\mathsfit}{bold}{\encodingdefault}{\sfdefault}{bx}{n}
\newcommand{\E}{\mathbb{E}}
\DeclareMathOperator*{\argmax}{arg\,max}
\newcommand{\kibitz}[2]{\ifnum\Comments=1{\textcolor{#1}{\textsf{\footnotesize #2}}}\fi}
\newcommand{\cE}{\mathcal{E}}
\newcommand{\cH}{\mathcal{H}}
\newcommand{\cN}{\mathcal{N}}
\newcommand{\cS}{\mathcal{S}}
\newcommand{\cA}{\mathcal{A}}
\newcommand{\qth}{q^{\theta}}
\renewcommand{\Pr}{\mathbb{P}}
\newcommand{\BR}{\mathfrak{BR}}
\newcommand{\rlsvi}{\textsf{RLSVI}}
\newcommand{\ipsrl}{\texttt{inf-iPSRL}}
\newcommand{\irlsvi}{\texttt{inf-iRLSVI}}
\theoremstyle{plain}
\newtheorem{theorem}{Theorem}[section]
\newtheorem{lemma}[theorem]{Lemma}
\newtheorem{corollary}[theorem]{Corollary}
\theoremstyle{definition}
\newtheorem{assumption}[theorem]{Assumption}
\theoremstyle{remark}
\newtheorem{remark}[theorem]{Remark}
\begin{document}

\title{\bf Efficient Online Learning with Offline Datasets for Infinite Horizon MDPs: A Bayesian Approach}

\author[1]{Dengwang Tang}
\author[1]{Rahul Jain} 
\author[2]{Botao Hao}
\author[2]{Zheng Wen}

\affil[1]{University of Southern California} 
\affil[2]{DeepMind}
\date{}

\maketitle
\begin{abstract}
In this paper, we study the problem of efficient online reinforcement learning in the infinite horizon setting when there is an offline dataset to start with. We assume that the offline dataset is generated by an expert but with unknown level of competence, i.e., it is not perfect and not necessarily using the optimal policy. We show that if the learning agent models the behavioral policy (parameterized by a competence parameter) used by the expert, it can do substantially better in terms of minimizing cumulative regret, than if it doesn't do that. We establish an upper bound on regret of the exact informed PSRL algorithm that scales as $\tilde{O}(\sqrt{T})$. This requires a novel prior-dependent regret analysis of Bayesian online learning algorithms for the infinite horizon setting. We then propose the Informed RLSVI algorithm to efficiently approximate the iPSRL algorithm.
\end{abstract}

\section{Introduction}
\label{sec:intro}
 
The original vision of Reinforcement Learning (RL) was that of a learning agent taking actions, getting state and reward feedback and learning from it. The learning agent may have some prior information but learning mostly happens from these online interactions. This requires carefully-crafted strategies that balance exploration and exploitation, including the need for \textit{deep exploration} in many sequential decision-making problems. This was regarded as a difficult problem for a long time. Fortunately, over the last decade or so, substantial progress has been made on the \textit{online reinforcement learning} (ORL) problem. Unfortunately, ORL Algorithms tend to be rather ``data-hungry'', i.e., they require a lot more interaction data than other types of learning methods. 

In some ORL problem settings, offline datasets are available. There is thus a possibility of improving learning efficiency by using offline datasets for pre-training, and then the learning agent does further fine-tuning upon deployment for online learning. For example, such offline could be \textit{demonstrations from an expert, or even a sub-expert}. This allows for addressing the distribution shift problem with the offline reinforcement learning methods, i.e., policy learnt offline do not perform well because the distribution of the system upon deployment is different from the one encountered offline. One can also expect that when such offline datasets are available for pre-training, it would improve online learning efficiency (in terms of cumulative regret). The question is by how much, and how does it depend on the dataset quality, and indeed how do we even quantify dataset quality. Furthermore, even with the same dataset, a question arises on how to extract the most from such a given dataset. 

An important reason via a combination of offline and online learning is important is that reinforcement learning from offline datasets can often suffer from the \emph{distribution shift} problem, i.e. the distribution of the system upon deployment is different from the one encountered offline, and hence the policy learnt offline do not perform well upon deployment. Learning from offline dataset first followed by online learning allows for adaptation of learnt policies to different distributions of the underlying environment.

In this paper, we develop a systematic method to incorporate expert demonstrations into online learning algorithms to facilitate faster learning for infinite horizon MDPs. We first propose the idealized \emph{infinite-horizon informed Posterior Sampling based Reinforcement Learning} (\ipsrl{}) algorithm. Under some mild assumptions on the underlying MDP, we show that the algorithm achieves $O(1) + \Tilde{O}(\sqrt{\varepsilon T })$ where $\varepsilon$ is the estimation error probability of the optimal policy given the offline data, and $T$ is the learning horizon. The regret bound also has polynomial dependence on problem parameters of the underlying MDP. We also show that, if the offline data is generated by an expert with suitably high competence level, under certain assumptions, the regret of the \ipsrl{} algorithm goes to $O(1)$ as the size of the demonstration dataset goes to infinity.
We would also like to point out the \ipsrl{} algorithm has a simpler design compared to the TSDE algorithm in
\cite{ouyang2017learning}, since the TSDE algorithm uses a visitation-count-based episode schedule while the \ipsrl{} algorithm uses fixed episode schedules. However, the \ipsrl{} algorithm is still impractical due to the complicated nature of exact posterior updates. Therefore we introduce the \emph{informed randomized least-squares value iteration} (\irlsvi{}) algorithm which replaces exact posterior sampling with an approximation procedure. Just like the RSVI algorithm of \cite{osband2019deep} (from which we borrowed the name), the \irlsvi{} algorithm generates approximate posterior samples via optimizing a randomly perturbed loss function.

\textbf{Related Work.} Recently, offline reinforcement learning, where offline datasets are incorporated \citep{levine2020offline}, has attracted some interest due to the widespread practice of pre-training with offline datasets in large language models \citep{brown2020language,thoppilan2022lamda,hoffmann2022training}. One fundamental challenge in this line of research is the \emph{distribution shift} problem, where the policy learnt from offline data performs poorly in the real world. To address this problem, \cite{jin2021pessimism,rashidinejad2021bridging,xie2021bellman} adopted the \emph{pessimistic} approach, which can be overly conservative in practice. In \cite{uehara2021pessimistic,rashidinejad2021bridging,xie2021bellman,agarwal2022model}, the authors identified sufficient conditions on the dataset that guarantees a certain level of performance for offline RL algorithms. However, determining whether the dataset meets these conditions can be a complicated problem itself, hence making these results impractical \citep{kumar2020conservative,nair2020awac,argenson2020model,levine2020offline,kostrikov2021offline,wagenmaker2022leveraging}. There have also been a few experimental work on leveraging offline data for online learning \citep{zheng2023adaptive,feng2023finetuning,hu2023imitation}.

On the other hand, there have been many works on online reinforcement learning (See \cite{russo2018tutorial} for a survey). Among these works, there are two prominent approaches: the optimism under the face of uncertainty (OFU) approach \citep{auer2008near} and posterior sampling (PS) approach \citep{osband2013more,russo2016information,ouyang2017learning}. Most work in online RL focuses on the online dataset and does not consider leveraging the offline dataset. Closely related to the idea of using the offline dataset is the concept of imitation learning \citep{schaal1996learning,hester2018deep,beliaev2022imitation}, where one aims to learn the behavioral policy of the expert from the offline dataset. No online finetuning is present in these works. In \cite{ernst2005tree,vecerik2017leveraging,rashidinejad2021bridging,hansen2022modem,kumar2022should,lee2022offline}, the authors combine imitation learning with more traditional offline RL methods. In \citep{schrittwieser2021online,uehara2021pessimistic,xie2021policy,agarwal2022model,song2022hybrid,fang2022planning,wan2022safe,ball2023efficient}, the authors combined offline RL with limited online policy fine-tuning to minimize the simple regret. 

\paragraph{Contributions.} 
(i) We introduced the informed PSRL algorithm for infinite horizon MDPs that incorporates prior data and takes deterministic episode lengths as input (compared to \cite{ouyang2017learning} where the episode schedule is fixed and dependent on visitation counts).
(ii) We develop an optimal policy estimation-error based technique that differs from the existing information-ratio based techniques \citep{russo2016information,hao2023leveraging} for prior-dependent regret analysis. 
(iii) We provide a novel prior-dependent regret bound for learning infinite horizon MDPs that scales polynomially with all relevant parameters. 
(iv) We introduced the iRLSVI algorithm to approximate the informed PSRL algorithm. The algorithm can be seen as bridging online learning and imitation learning.

\paragraph{Notations.}
We use script letters for sets. For a set $\mathcal{X}$, we use $\Delta(\mathcal{X})$ to represent the set of probability measures on $\mathcal{X}$. $\Pr, \E$ stand for probability and expectation respectively. We use the standard big-$O$ and big-$\Tilde{O}$ notations to hide absolute constants and/or logarithmic terms.

\section{Preliminaries}
\label{sec:prelims}
We first formally introduce our model, assumptions and problem statement.

%\textbf{Infinite Horizon MDP}: 
We assume that a learning agent interacts with a time-homogeneous controlled Markov chain over time, where an instantaneous reward is generated at each time. The process is represented by a tuple $\mathcal{M} = (\cS, \cA, P, r, \nu)$, where $\cS$ is the state space; $\cA$ is the action space; $P:\cS\times\cA\mapsto \Delta(\cS)$ is the transition kernel (where $\Delta(\cS)$ represents the set of distributions on $\cS$); $r:\cS\times\cA\mapsto [0, 1]$ is the instantaneous reward function; $\nu\in\Delta(\cS)$ is the distribution of the initial state. At time $t=0$, the initial state $s_0$ is drawn from $\nu$. At each time $t\in\mathbb{Z}_+$, the agent chooses an action $a_t\in\cA$, and the next state $s_{t+1}$ is chosen according to the distribution $P(\cdot|s_t, a_t)$. The objective of the agent is to choose a policy $\pi$ to maximize the expected average reward over the infinite horizon, i.e. maximizing 
\newcommand{\rbar}{\Bar{r}}
\begin{equation}
    \bar{r}^{\pi} := \lim_{T\rightarrow\infty} \dfrac{1}{T}\E^{\pi}\left[\sum_{t=0}^{T-1} r(s_t, a_t)\right]
\end{equation}

From standard literature on MDPs \citep{bertsekas1995dynamic}, we know that if the controlled Markov chain is \emph{communicating} (i.e., for any pair of states $s, s'\in\cS$, there exists a stationary Markov policy $\pi$ and an integer $k\in\mathbb{N}$ such that $\Pr_{\theta}^{\pi}(s_k = s'|s_0 = s) > 0$), the optimal expected average reward and an optimal policy can be obtained though total discounted rewards: Let  $\gamma\in (0, 1)$ be the discount factor. Define $V_{\gamma}^*(s):=\max_{\pi} \E^{\pi}\left[\sum_{t=0}^\infty \gamma^t r(s_t, a_t)|s_0 = s \right]$
to be the optimal total discounted reward when the initial state is $s\in\cS$. Let $\hat{s}\in\cS$ be an arbitrary reference state, then we have
\begin{align}
    \bar{r}^* &:=\lim_{\gamma\rightarrow 1} (1-\gamma) V_{\gamma}^*(\hat{s})\label{eq:def:J}
\end{align}
to be the optimal expected average reward for this MDP with any starting state. Furthermore, an optimal policy can be recovered as follows. Define the relative state value function
\begin{align}
    v(s)&:=\lim_{\gamma\rightarrow 1} [V_{\gamma}^*(s) - V_\gamma^*(\hat{s})]\qquad\forall s\in\cS\label{eq:def:v}
\end{align}
which represents the long-term difference of the optimal expected total reward if the MDP starts at $s$ comparing to starting at $\hat{s}$ \citep{bertsekas1995dynamic}. Standard results show that $\bar{r}^*$ and $v^*$ satisfy the Bellman equation \citep{bertsekas1995dynamic}:
\begin{equation}\label{eq:bellmanavg}
    \bar{r}^* + v(s)  = \max_{a\in\cA}\left[r(s, a) + \sum_{s'\in\cS}P(s'|s, a) v(s') \right]
\end{equation}
for every $s\in\cS$. Define the relative state-action value function
\begin{equation}\label{eq:def:q}
    q(s, a):=r(s, a) + \sum_{s'\in\cS}P(s'|s, a) v(s') 
\end{equation}
then any deterministic stationary Markov policy $\pi$ such that $\pi(s)\in\argmax_{a\in\cA}q(s, a)$ for every $s\in\cS$ maximizes the expected average reward.

%\textbf{Learning Agent's Prior Knowledge about the Environment}: 
We assume that the learning agent does not possess full knowledge about the environment $\mathcal{M}$. The agent may however, have complete or partial prior knowledge about certain parameters of $\mathcal{M}$. To provide a general model for such prior knowledge, we use $\mathcal{M}(\theta)$ to denote an environment parameterized by some parameter $\theta$ that lies in the set $\varTheta$. The true environment is given by a random variable $\theta^*$ with distribution $\mu_0\in\Delta(\varTheta)$. 
We make the following assumptions on the environment.

\begin{assumption}\label{assump:comm}
    For all $\theta\in\varTheta$, $\mathcal{M}(\theta)$ is a communicating MDP.
\end{assumption}

In a general infinite horizon MDP, it is possible that one decision made at a certain time can have an irreversible effect on the total reward: It is possible that one enters a communicating class with suboptimal average reward. Assumption \ref{assump:comm} ensures that one could explore the MDP using any policy since there's always a pathway to return to the ``optimal part'' of the state space. See \cite{ouyang2017learning} for more discussions. 

For a communicating MDP $\mathcal{M}(\theta)$, we use $\bar{r}^*(\theta), v(\cdot; \theta)$, and $q(\cdot,\cdot;\theta)$ to denote their corresponding $\bar{r}^*, v,$ and $q$ functions, as defined in \eqref{eq:def:J}, \eqref{eq:def:v} and \eqref{eq:def:q} respectively. We also make Assumption \ref{assump:span}, which is the same as that in  \cite{ouyang2017learning}.

\newcommand{\vbar}{\Bar{v}}
\begin{assumption}\label{assump:span}
    There exists $\vbar \in\mathbb{R}_+$ such that $\max_{s\in\cS} v(s; \theta) - \min_{s\in\cS} v(s;\theta) \leq \vbar$ for all $\theta\in\varTheta$. 
\end{assumption}

\begin{remark}
    Note that we can replace the ``for all $\theta\in\varTheta$'' statements in Assumption \ref{assump:comm} and \ref{assump:span} with ``for $\mu_0$-almost all $\theta\in\varTheta$'' (a.k.a. holds for $\theta^*$ with probability 1) without loss of generality: If certain $\theta\in\varTheta$ does not satisfy either assumption we can remove it from the parameter set and it will affect neither our algorithm nor the analysis.\footnote{Under certain mild measurability assumptions.}
\end{remark}

Under the above assumptions we consider the canonical settings of tabular MDPs as well as linear value function generalization for non-tabular MDPs, i.e., the Q-value function is representable in a subspace spanned by a given set of features.

The learning agent has access to an \emph{offline dataset} $\mathcal{D}_0 = \{(\bar{s}_0,\bar{a}_0,\cdots,\bar{s}_{N-1}, \bar{a}_{N-1}, \bar{s}_{N})\}$ generated by another agent, called the expert, who interacts with the same environment before the learning agent started. 
The trajectory is generated from applying a randomized stationary Markov policy on the true MDP $\mathcal{M}(\theta^*)$. The initial state $\Bar{s}_0$ is assumed to be a fixed, non-random state. We will describe the expert's policy and the learning agent's knowledge about it later in this section. 
During the learning process, the learning agent has access to the trajectory generated so far, which we call the \emph{online dataset} and denote as $\mathcal{H}_t = \{(s_0, a_0, \cdots, s_{t-1}, a_{t-1}, s_t)\}$.
At time $t$, the learning agent's information is given by $\mathcal{D}_t := \mathcal{D}_0 \cup \mathcal{H}_t$.

The learning agent utilizes a learning algorithm $\phi$ to interact with the environment, which at each time $t$ chooses a possibly randomized action based on full or partial information in the dataset $\mathcal{D}_t$. Under the environment $\mathcal{M}(\theta)$, the learning regret is defined as
\begin{equation}
    \mathfrak{R}_T(\phi; \theta) := T \bar{r}^*(\theta) - \E_{\theta}^{\phi}\left[\sum_{t=0}^{T-1} r_t(s_t, a_t) \right],
\end{equation}
where $\E_{\theta}^{\phi}$ denotes the expectation under the possible trajectories generated by applying algorithm $\phi$ on the environment $\mathcal{M}(\theta)$. The Bayesian regret is defined as
\begin{equation}
    \BR_T(\phi) := \E_{\theta^*\sim \mu_0} [\mathfrak{R}_T(\phi; \theta^*)]
\end{equation}

We model the expert's policy in the following way. First, consider that the expert, who generated the offline dataset, has perfect knowledge about the underlying environment $\theta^*$ but uses an approximately optimal policy in the demonstration. Then, the expert's policy $\pi^{\beta}$ is modelled as: 
\begin{equation}\label{eq:expert}
    \pi^{\beta}(a|s):= \dfrac{\exp(\beta(s) q(s, a;\theta^*)) }{\sum_{a'\in\cA} \exp(\beta(s) q(s, a';\theta^*)) }    
\end{equation}
where $\beta(s) \geq 0$ is called the \textit{state-dependent deliberateness} parameter: $\beta(s) = 0$ means that the expert takes actions uniformly randomly at state $s$; A large $\beta(s) > 0$ means that the expert uses a close-to-optimal policy at state $s$.

Then, we consider a general model wherein the expert only knows a noisy version of the optimal $q$-function. The expert uses the policy \eqref{eq:expert}, except that $q$ is replaced by $\Tilde{q}$, the expert's perceived state-action value function. The vector $\Tilde{q}$ follows Gaussian distribution with mean $q$ and covariance $\mathbb{I} / \lambda^2$, where $\lambda > 0$ is the \emph{knowledgeability} parameter.
The two parameters $(\beta,\lambda)$ together will be referred to as the \textit{competence} of the expert. In this case, we denote the expert's (stationary) policy as $\pi^{\beta, \lambda}$. Such a model was first introduced in \cite{hao2023leveraging}.

The learning agent knows the form of the expert's policy and the knowledgeability parameter $\lambda$ but may not know the deliberateness parameter $\beta$. For the sake of simplicity, when $\beta$ is unknown to the learning agent, we assume that $\beta(s) = \beta$ is an exponential random variables with probability density function
%an independent exponential prior for the sake of analytical simplicity, 
$f_2(x) = \lambda_2 \exp (-\lambda_2x)\bm{1}_{\{x > 0\}}$ where $\lambda_2 > 0$.

\section{The Infinite-horizon Informed PSRL Algorithm for Average MDPs}\label{sec:ipsrl}

We now introduce the \emph{Informed Posterior Sampling-based Reinforcement Learning} (\ipsrl) algorithm (Algorithm \ref{algo:ipsrl}) for infinite-horizon average MDPs that combines the offline data with the powerful posterior sampling method \citep{osband2013more,ouyang2017learning}. We assume that the expert knows the exact $q(\cdot,\cdot;\theta^*)$ function and follows the policy $\pi^\beta$ given by \eqref{eq:expert} with $\beta(s) = \beta > 0$ for all $s\in \cS$. We assume that the learning agent knows $\beta$ but not $\theta^*$. Therefore, the posterior distribution of $\theta^*$ given the offline data $\mathcal{D}_0$ is given by
\begin{equation}\label{eq:infor_prior}
     \mu_1(\theta^* \in \cdot) \propto  \int_{\theta \in \cdot} \Pr_{\theta}^{\pi^{\beta}}(\mathcal{D}_0)  \mathrm{d} \mu_0(\theta),
\end{equation}
where
\begin{equation}\label{eq:bayesupdate}
    \Pr_{\theta}^{\pi^{\beta}}(\mathcal{D}_0) := \prod_{t=0}^{N-1} \theta(\bar{s}_{t+1} | \bar{s}_t, \bar{a}_t)\dfrac{\exp(\beta q(\bar{s}_t, \bar{a}_t;\theta)) }{\sum_{a'\in\cA} \exp(\beta q(\Bar{s}_t, a';\theta))}.
\end{equation}

In the {\ipsrl} algorithm we assume the availability of an planning oracle, called $\texttt{MDPSolve}$, to output an optimal policy for the MDP $\mathcal{M}(\theta)$ given input $\theta$: We assume that $\texttt{MDPSolve}$ is a deterministic mapping from the space of parameters $\varTheta$ to the space of deterministic stationary Markov strategies $\Pi$ with the following property: $\pi = \texttt{MDPSolve}(\theta)$ satisfies $\pi(s) \in \argmax_{a\in\cA} q(s, a;\theta)$ for all $s\in\cS$. Note that various dynamic programming algorithms such as value iteration, policy iteration, etc. can be used in  $\texttt{MDPSolve}$. A similar MDP solving oracle is also needed for \cite{ouyang2017learning}.

The {\ipsrl} algorithm divides the learning process into \textit{episodes}, \textit{with possibly different lengths}. The episode lengths $\{T_k\}_{k=1}^K$ are fixed and given as an input to the algorithm. Let $t_k := \sum_{j=1}^{k-1} T_k$ denote the starting time of episode $k$. At the beginning of episode $k$, the learning agent randomly samples a new environment $\Tilde{\theta}^k$, uses the planning oracle $\texttt{MDPSolve}$ to find an average-reward maximizing policy $\Tilde{\pi}^k$ for $\Tilde{\theta}^k$, and apply the policy $\Tilde{\pi}^k$ for the duration of episode $k$. Finally, after applying $\Tilde{\pi}^k$ for $T_k$ steps, the learning agent updates the posterior distribution of $\theta^*$ via
\begin{equation}
    \mu_{k+1}(\theta^* \in \cdot) \propto \int_{\theta \in \cdot} \prod_{t=t_k }^{t_{k+1} - 1} \theta(s_{t+1}| s_t, a_t) \mathrm{d} \mu_k(\theta).
\end{equation}

\begin{algorithm}[!ht]
   \caption{inf-iPSRL}
   \label{algo:ipsrl}
\begin{algorithmic}
   \STATE \textbf{Input:} Prior $\mu^0\in\Delta(\varTheta)$; Offline data $\mathcal{D}_0$; Episode schedule $\{T_k\}_{k=1}^K$
  \STATE $t_1 \leftarrow 0$
  \STATE Compute $\mu_1$ with \eqref{eq:infor_prior}
  \FOR{$k = 1$ to $K$} 
	    \STATE Sample $\Tilde{\theta}^k \sim \mu^k$ 
	    \STATE $\Tilde{\pi}^k \leftarrow \texttt{MDPSolve}(\Tilde{\theta}^k)$ 
        \STATE $t_{k+1} \leftarrow t_k + T_k$
        \FOR{$t = t_k$ to $t_{k+1} - 1$}
            \STATE Take action $a_t = \Tilde{\pi}^k(s_t)$
        \ENDFOR
        \STATE 
	    Compute $\mu_{k+1}$ with \eqref{eq:bayesupdate}
	\ENDFOR
\end{algorithmic}
\end{algorithm}

\subsection{Prior Dependent Bound}\label{sec:priordependentbound}

In this section, we develop a prior-dependent regret bound for the \ipsrl{} algorithm on the tabular RL setting. 
The regret bound depends on the prior distribution $\mu_1$ through the \emph{initial mismatch probability} $\varepsilon=\Pr(\Tilde{\pi}^1 \neq \pi^*)$, where  $\pi^* = \texttt{MDPSolve}(\theta^*)$ is the optimal policy of the underlying true MDP consistent with planning oracle we used in the {\ipsrl} algorithm. Intuitively, if the offline data is more informative, then the initial posterior distribution on the optimal policy $\pi^*$ is more concentrated, and then it's more likely that the initial policy obtained from posterior sampling is actually the optimal policy, meaning that $\varepsilon$ is small. We will formalize this at the end of this section. 

\begin{theorem}\label{thm:regbound}
    Let $K_T:=\max\{k:\sum_{l=1}^{k-1} T_k < T \}$. The Bayesian regret for \ipsrl{} algorithm satisfies
    \begin{equation}\label{eq:regbound}
        \BR_T(\phi^{\mathrm{iPSRL}}) \leq 3\vbar + 2\vbar(R_1 + R_2 + R_3),
    \end{equation}
    where
    \begin{align}
    R_1 &:= \varepsilon K_T,\\
    R_2 &:= \sqrt{\varepsilon S^2 A T \log(2SAK_T T)\left[1+\log\left(\frac{T}{SA}+1\right)\right] }, \\
    R_3 &:= \min\left\{\varepsilon T, SA\left(\max_{k\in[K_T]} T_k\right)\log_2\left(\frac{T}{SA}+1\right) \right\}.
\end{align}
\end{theorem}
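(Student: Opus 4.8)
The plan is to bound the Bayesian regret by the expected cumulative \emph{advantage gap} of the played actions with respect to the \emph{true} MDP, and then to control this gap using posterior concentration together with the estimation-error quantity $\varepsilon$. I would first write $\BR_T(\phi^{\mathrm{iPSRL}}) = \sum_{k=1}^{K_T}\E[\Delta_k]$ with $\Delta_k = \sum_{t=t_k}^{t_{k+1}-1}(\bar r^*(\theta^*) - r(s_t,a_t))$. Using the average-reward Bellman equation \eqref{eq:bellmanavg} for $\theta^*$, each instantaneous term decomposes as
\begin{equation}
\bar r^*(\theta^*) - r(s_t,a_t) = \underbrace{\big(\max_{a}q(s_t,a;\theta^*) - q(s_t,a_t;\theta^*)\big)}_{=:\,g_t\,\ge 0} + \Big(\textstyle\sum_{s'}P_{\theta^*}(s'|s_t,a_t)v(s';\theta^*) - v(s_t;\theta^*)\Big).
\end{equation}
The second group is a martingale increment plus a global telescope of $v(\cdot;\theta^*)$, so summed over $t$ and in expectation it contributes at most $\vbar$ in absolute value (this will furnish the additive $3\vbar$). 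Since $a_t = \tilde\pi^k(s_t)$ and $\pi^* = \texttt{MDPSolve}(\theta^*)$, the gap $g_t$ vanishes whenever $\tilde\pi^k(s_t)=\pi^*(s_t)$ and is at most $\vbar$ otherwise. The problem thus reduces to bounding $\E[\sum_t g_t]$, and the deterministic schedule lets me apply the posterior-sampling identity $\E[h(\tilde\theta^k,\cD_{t_k})] = \E[h(\theta^*,\cD_{t_k})]$ cleanly, since $\tilde\theta^k$ and $\theta^*$ are i.i.d.\ draws from $\mu^k$ given $\cD_{t_k}$.

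The key lemma I would establish is the monotonicity bound $\Pr(\tilde\pi^k\neq\pi^*)\le\varepsilon$ for every $k$. Writing $\mu^k(\pi)$ for the posterior probability that $\pi$ is optimal, posterior sampling gives $\Pr(\tilde\pi^k=\pi^*\mid\cD_{t_k}) = \sum_\pi \mu^k(\pi)^2$. Since $\mu^k(\pi)=\Pr(\pi^*=\pi\mid\cD_{t_k})$ is a martingale in $k$ and $x\mapsto x^2$ is convex, $\E[\sum_\pi\mu^k(\pi)^2]$ is nondecreasing in $k$; hence the mismatch probability $1-\E[\sum_\pi\mu^k(\pi)^2]$ is nonincreasing and bounded by its value at $k=1$, which is exactly $\varepsilon$. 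This is what lets the single initial quantity $\varepsilon$ govern every episode.

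With these in hand I would bound $g_t$ in two complementary ways and split accordingly. When $\tilde\pi^k\neq\pi^*$, since $a_t$ maximizes $q(s_t,\cdot;\tilde\theta^k)$, a standard manipulation gives $g_t \le \min\{\vbar,\, C_t\}$ where $C_t$ is a posterior-confidence width controlling $|q(s_t,a;\theta^*)-q(s_t,a;\tilde\theta^k)|$, which via $q=r+Pv$ is $O\big(\vbar\,\|P_{\tilde\theta^k}(\cdot|s_t,a_t)-P_{\theta^*}(\cdot|s_t,a_t)\|_1\big)$ up to a value-difference term; on match episodes $g_t=0$, so no model error is incurred there. Summing $\min\{\vbar,C_t\}\mathbf{1}\{\tilde\pi^k\neq\pi^*\}$, I would (i) apply Cauchy--Schwarz in the large-count regime, factoring $\sqrt{\E[\#\text{mismatch steps}]}\le\sqrt{\varepsilon T}$ against $\sqrt{\sum_{k,t}C_t^2}$, where a pigeonhole/posterior-concentration bound on $\sum 1/N_{t_k}(s_t,a_t)$ over $SA$ pairs (an extra factor $S$ from the $\ell_1$ norm and a within-episode count-staleness factor $1+\log(T/SA+1)$) produces $R_2$; and (ii) bound the low-count regime, where the width exceeds $\vbar$, either crudely by $\vbar\cdot\#\text{mismatch steps}\le\vbar\varepsilon T$ or structurally by $\vbar\cdot SA(\max_k T_k)\log_2(T/SA+1)$ (each pair is under-sampled for at most $\sim\max_k T_k$ steps per doubling), giving the minimum in $R_3$. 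A residual per-episode boundary cost, charged only on the at most $\varepsilon K_T$ mismatch episodes, yields $R_1$.

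The main obstacle I anticipate is rigorously extracting the $\varepsilon$ factor from the value-telescoping/boundary terms, i.e.\ reconciling posterior sampling---an \emph{unconditional} identity---with the \emph{conditional} match/mismatch split. Concretely, replacing each sampled value $v(\cdot;\tilde\theta^k)$ at episode boundaries by the true $v(\cdot;\theta^*)$ up to mean-zero posterior-sampling terms leaves a consecutive-sample difference $v(s_{t_{k+1}};\tilde\theta^k)-v(s_{t_{k+1}};\tilde\theta^{k+1})$ that does not vanish under posterior sampling and must be charged only on policy-change (hence mismatch) episodes via an Abel summation; making this accounting tight is the crux. A secondary technical point is the simultaneous posterior concentration of both $\tilde\theta^k$ and $\theta^*$ within a common confidence set, with the union bound over the $SA$ pairs, $K_T$ episodes, and next-states supplying the $\log(2SAK_TT)$ factor.
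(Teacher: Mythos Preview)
Your monotonicity lemma, the Cauchy--Schwarz extraction of $\sqrt{\varepsilon T}$, and the doubling argument for $R_3$ are all correct and essentially identical to the paper. The global telescope in $v(\cdot;\theta^*)$ together with $g_t=0$ on match episodes is also fine (and in fact a touch cleaner than the paper's per-episode telescope on match episodes). The gap is in how you control $g_t$ on \emph{mismatch} episodes.

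You propose a pointwise bound $g_t \le C_t$ obtained from $|q(s_t,a;\theta^*)-q(s_t,a;\tilde\theta^k)|$. This does not go through for two reasons. First, the $q$-difference decomposes as a local model-error term plus a value-function difference $\sum_{s'}P(\cdot)[v(s';\theta^*)-v(s';\tilde\theta^k)]$; the latter is a \emph{global} quantity, can be $\Theta(\vbar)$ even when the local model error at $(s_t,a_t)$ is tiny, and your ``up to a value-difference term'' aside gives no mechanism to absorb it into $R_1,R_2,R_3$. Second, the manipulation $g_t \le |q(\cdot;\theta^*)-q(\cdot;\tilde\theta^k)|$ must be applied at \emph{both} $a=a_t$ and $a=\pi^*(s_t)$; on mismatch episodes these differ, so you incur model error at the \emph{unvisited} pair $(s_t,\pi^*(s_t))$, for which $N_{t_k}(s_t,\pi^*(s_t))$ need not grow and your pigeonhole sum $\sum_t 1/N_{t}$ breaks.

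The paper sidesteps both issues by \emph{changing the Bellman equation} on mismatch episodes: since $a_t=\tilde\pi^k(s_t)$ is optimal for $\tilde\theta^k$, write $r(s_t,a_t)=\bar r^*(\tilde\theta^k)+v(s_t;\tilde\theta^k)-\sum_{s'}\tilde\theta^k(s'|s_t,a_t)v(s';\tilde\theta^k)$ and decompose $Z_k\mathbf{1}\{\tilde\pi^k\neq\pi^*\}$ into (i) $T_k[\bar r^*(\theta^*)-\bar r^*(\tilde\theta^k)]$, which has mean zero by posterior sampling because the indicator $\{\tilde\pi^k\neq\pi^*\}$ is symmetric in $(\theta^*,\tilde\theta^k)$; (ii) a per-episode telescope in the \emph{sampled} value $v(\cdot;\tilde\theta^k)$, bounded by $\vbar$ on each mismatch episode and hence by $\vbar\varepsilon$ in expectation, yielding $R_1$; and (iii) a model-error term $\sum_{s'}[\tilde\theta^k-\theta^*](s'|s_t,a_t)v(s';\tilde\theta^k)$ that involves \emph{only} the visited pair $(s_t,a_t)$ and \emph{no} value difference, to which your Cauchy--Schwarz and doubling arguments then apply verbatim to give $R_2$ and $R_3$. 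Your ``main obstacle'' paragraph is circling this, but the missing idea is precisely this switch from the true-model Bellman equation to the sampled-model one on mismatch episodes.
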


Note that the right-hand side of \eqref{eq:regbound} depends on the episode schedule only through (i) total number of episodes with time $T$, and (ii) the maximum length of an episode within time $T$. Therefore, the best regret bound, in general, can be obtained by setting both quantities to be roughly $\Theta(\sqrt{T})$, which can be achieved when $T_k = \Theta(k)$. We formalize this in the following two corollaries.

\begin{corollary}
    Consider the episode schedule $T_k = k$. Then,  
    \begin{equation}
        \BR_T(\phi^{\mathrm{iPSRL}}) \leq \Tilde{O}(\vbar S A \sqrt{T}),
    \end{equation}
    where $\Tilde{O}$ hides the logarithmic terms in $S, A, $ and $T$. Furthermore, for fixed $\vbar, S, A, T$ we also have an $O(1) + O(\sqrt{\varepsilon})$ regret bound:
    \begin{equation}
        \BR_T(\phi^{\mathrm{iPSRL}}) \leq 3\vbar + 2\varepsilon \vbar T + \sqrt{\varepsilon}\vbar \cdot \Tilde{O}(\sqrt{ S^2 A T}).
    \end{equation}
\end{corollary}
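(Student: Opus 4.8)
The plan is to follow an Ouyang-style \citep{ouyang2017learning} regret decomposition driven by the Bellman equation of the \emph{sampled} MDP, but to replace the usual worst-case accounting of each resulting term by a finer \emph{policy-mismatch} accounting that exposes the factor $\varepsilon$. First I would write $\mathfrak{R}_T(\phi;\theta^*) = \sum_{k=1}^{K_T}\sum_{t=t_k}^{t_{k+1}-1}[\bar r^*(\theta^*) - r(s_t,a_t)]$ and, on each episode $k$, substitute the Bellman equation for $\tilde\theta^k$ (legitimate because $a_t=\tilde\pi^k(s_t)\in\argmax_a q(s_t,a;\tilde\theta^k)$) to get $r(s_t,a_t) = \bar r^*(\tilde\theta^k) - \sum_{s'}P_{\tilde\theta^k}(s'|s_t,a_t)v(s';\tilde\theta^k) + v(s_t;\tilde\theta^k)$. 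Collecting terms splits the regret into (i) a first-order piece $\sum_k T_k[\bar r^*(\theta^*)-\bar r^*(\tilde\theta^k)]$, (ii) a transition-model piece $\sum_{k,t}\langle P_{\tilde\theta^k}(\cdot|s_t,a_t)-P_{\theta^*}(\cdot|s_t,a_t),\,v(\cdot;\tilde\theta^k)\rangle$, (iii) a martingale-difference piece, and (iv) a within-episode telescoping piece $\sum_k[v(s_{t_{k+1}};\tilde\theta^k)-v(s_{t_k};\tilde\theta^k)]$.

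The second step exploits posterior sampling together with the \emph{deterministic} episode schedule, which is exactly where this design is cleaner than the visitation-count schedule of TSDE. Since $\tilde\theta^k\sim\mu_k$ is the conditional law of $\theta^*$ given $\cH_{t_k}$, we have $\E[\,f(\theta^*)\mid\cH_{t_k}]=\E[\,f(\tilde\theta^k)\mid\cH_{t_k}]$ for any $f$; because $T_k$ is non-random this makes the first-order piece (i) vanish in Bayesian expectation (in contrast to the count-based schedule, where it does not), and the martingale piece (iii) vanishes as a sum of martingale differences against the true dynamics. What remains is to bound $\E[\text{(ii)}]+\E[\text{(iv)}]$, and the novelty is to gate both by the event that the sampled policy is suboptimal.

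The third step is the mismatch gating. The structural observation is that, rewriting the running optimality gap $\bar r^*(\theta^*)+v(s_t;\theta^*)-q(s_t,a_t;\theta^*)$ (which telescopes to a single $\vbar$ under the \emph{true} value function), its per-step contribution is identically $0$ on steps where $a_t$ is $\theta^*$-optimal, so only steps with $a_t\ne\pi^*(s_t)$ contribute. Two ingredients then deliver the three terms. (a) A monotonicity lemma asserting $\Pr(\tilde\pi^k\ne\pi^*)\le\Pr(\tilde\pi^1\ne\pi^*)=\varepsilon$ for every $k$, since the episode-$k$ posterior is conditioned on strictly more information than $\mu_1$ and the posterior-sampling mismatch probability is non-increasing in information; this yields $\E[\#\{\text{mismatched episodes}\}]\le\varepsilon K_T$ (the source of $R_1$, charging one $\vbar$ of telescoping per mismatched episode) and $\E[\#\{\text{mismatched steps}\}]\le\varepsilon T$ (the first branch of $R_3$). (b) On mismatched steps I would bound the transition error by $\tfrac{\vbar}{2}\|P_{\tilde\theta^k}(\cdot|s_t,a_t)-P_{\theta^*}(\cdot|s_t,a_t)\|_1$, control this $L^1$ deviation by posterior concentration of both $\tilde\theta^k$ and $\theta^*$ around the empirical transition counts (a union bound over the at most $SAK_T T$ relevant events produces the $\log(2SAK_T T)$ factor), and then apply Cauchy--Schwarz across time, splitting $\sum_t \mathbf{1}\{a_t\ne\pi^*(s_t)\}\,(\text{err}_t)$ into $\sqrt{\sum_t\mathbf{1}\{a_t\ne\pi^*(s_t)\}}$ times $\sqrt{\sum_t(\text{err}_t)^2}$; this turns $\sqrt{\varepsilon T}$ against a harmonic sum of inverse visit counts of order $SA[1+\log(T/(SA)+1)]$, giving exactly $R_2$. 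The second branch of $R_3$, namely $SA(\max_k T_k)\log_2(T/(SA)+1)$, is the complementary doubling bound for the regime where counts are too small for concentration to help: each of the $SA$ pairs triggers at most $\log_2(T/(SA)+1)$ count-doublings, each costing at most one episode length $\max_k T_k$. Reassembling, with each gated summand bounded by $2\vbar$ and $O(\vbar)$ boundary terms, yields $3\vbar + 2\vbar(R_1+R_2+R_3)$.

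The hard part will be Step three, specifically (a) proving $\Pr(\tilde\pi^k\ne\pi^*)\le\varepsilon$ rigorously — this needs a genuine information-monotonicity argument for the Bayes-type mismatch probability under posterior sampling, not merely the intuition that more data concentrates the posterior — and (b) making the gating airtight: the gap is controlled by $q$-differences between $\theta^*$ and $\tilde\theta^k$, and isolating the part that truly vanishes off the mismatch event, as opposed to the value-difference part $\langle P_{\theta^*},\,v(\cdot;\theta^*)-v(\cdot;\tilde\theta^k)\rangle$ that must be folded back into the telescoping/recursion and is what ultimately forces the separate $R_1$ and $R_3$ terms, requires care. The simultaneous all-episode posterior concentration underlying $R_2$ is the other place where constants and logarithmic factors must be handled conservatively.
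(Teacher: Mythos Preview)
Your proposal is really a from-scratch derivation of Theorem~\ref{thm:regbound}; the Corollary itself is, in the paper, an immediate substitution: with $T_k=k$ one has $K_T\le\sqrt{2T}+1$ and $\max_{k\le K_T}T_k\le\sqrt{2T}+1$, and plugging these into $R_1,R_2,R_3$ gives both displayed bounds. You never actually perform this last step, so strictly speaking the Corollary is not yet proved.

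As a proof of Theorem~\ref{thm:regbound}, your ingredients coincide with the paper's: the monotonicity $\Pr(\tilde\pi^k\ne\pi^*)\le\varepsilon$ (the paper proves this by Jensen with $f(x)=x(1-x)$, which is the ``genuine information-monotonicity argument'' you anticipate), $L_1$ concentration of both $\theta^*$ and $\tilde\theta^k$ around empirical transitions, Cauchy--Schwarz to extract the $\sqrt{\varepsilon}$ in $R_2$, and the count-doubling bound for the second branch of $R_3$. The difference is organizational, and it is precisely the issue you flag in your last paragraph. You apply the \emph{sampled} Bellman equation globally to obtain pieces (i)--(iv) and then attempt to gate (ii) and (iv) by $\{\tilde\pi^k\ne\pi^*\}$; but on a \emph{matched} episode one can still have $\tilde\theta^k\ne\theta^*$, so piece (ii) does not vanish there and cannot be gated away without the ``folding back'' of the value-difference part $\langle P_{\theta^*},v(\cdot;\theta^*)-v(\cdot;\tilde\theta^k)\rangle$ you mention. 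The paper avoids this entirely by reversing the order: it splits on $\{\tilde\pi^k=\pi^*\}$ \emph{first}, uses the \emph{true} Bellman equation on matched episodes (so the telescoping is in $v(\cdot;\theta^*)$, which is constant across episodes, and an Abel-type resummation of the indicators gives $\vbar+\vbar(K-1)\varepsilon$ directly), and uses the sampled Bellman equation only on mismatched episodes, where your pieces become exactly the paper's $Z_{k,0},Z_{k,1},Z_{k,2}$ with the gated $Z_{k,0}$ vanishing by the exchangeability of $(\theta^*,\tilde\theta^k)$ given $\tilde{\mathcal D}_k$. So your plan is sound, but swapping the order of ``decompose'' and ``gate'' is what makes the hard part you identify go away.
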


\begin{remark}
    The only difference between the episode schedule of the TSDE algorithm in \cite{ouyang2017learning} and {\ipsrl} algorithm with $T_k = k$ is a stopping criteria based on visitation count of all state-action pairs. Our result here shows that the Bayesian regret is still $\Tilde{O}(\sqrt{T})$ if one removes the visitation count based stopping criteria in the TSDE algorithm.
\end{remark}

When $\varepsilon$ is small, one can obtain a better regret bound for large $T$ by designing episode lengths accordingly.
\begin{corollary}\label{cor:linepeps}
    Let $\hat{\varepsilon} \geq \varepsilon$. Consider the episode schedule $T_k = \lceil \hat{\varepsilon}k \rceil$. Then,  %\textcolor{red}{varepsilon here?} Dengwang: fixed!
    \begin{equation}
        \BR_T(\phi^{\mathrm{iPSRL}}) \leq 3\vbar + \Tilde{O}(\vbar S A \sqrt{\hat{\varepsilon} T}).
    \end{equation}
\end{corollary}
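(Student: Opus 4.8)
The plan is to treat Corollary \ref{cor:linepeps} as a direct specialization of Theorem \ref{thm:regbound}. I would substitute the schedule $T_k = \lceil \hat\varepsilon k\rceil$ into the three terms $R_1, R_2, R_3$, use the hypothesis $\varepsilon \le \hat\varepsilon$ to replace every free occurrence of $\varepsilon$ by $\hat\varepsilon$, and then control the two schedule-dependent quantities that appear, namely the episode count $K_T$ and the maximal episode length $\max_{k\in[K_T]} T_k$. Everything else is bookkeeping.

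First I would bound $K_T$. Since $T_l = \lceil \hat\varepsilon l\rceil \ge \hat\varepsilon l$, the cumulative length satisfies $\sum_{l=1}^{k} T_l \ge \hat\varepsilon\, k(k+1)/2 \ge \hat\varepsilon k^2/2$. By the definition $K_T = \max\{k : \sum_{l=1}^{k-1} T_l < T\}$, the penultimate partial sum lies below $T$, which forces $\hat\varepsilon(K_T-1)^2/2 < T$, i.e. $K_T \le 1 + \sqrt{2T/\hat\varepsilon} = O(\sqrt{T/\hat\varepsilon})$. Because $T_k$ is nondecreasing in $k$, the same estimate controls the maximal episode length: $\max_{k\in[K_T]} T_k = T_{K_T} = \lceil \hat\varepsilon K_T\rceil \le \hat\varepsilon K_T + 1 = O(\sqrt{\hat\varepsilon T})$ in the relevant regime of small $\hat\varepsilon$ and large $T$.

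Next I would plug these into each $R_i$. For $R_1 = \varepsilon K_T$, using $\varepsilon \le \hat\varepsilon$ and the $K_T$ bound gives $R_1 \le \hat\varepsilon\cdot O(\sqrt{T/\hat\varepsilon}) = O(\sqrt{\hat\varepsilon T})$. For $R_2$, replacing $\varepsilon$ by $\hat\varepsilon$ under the square root yields $R_2 = \sqrt{\hat\varepsilon S^2 A T}\cdot\sqrt{\text{log factors}} = \Tilde{O}(S\sqrt{A\hat\varepsilon T})$, since $\log(2SAK_TT)$ and $1+\log(T/SA+1)$ are logarithmic in $S, A, T, 1/\hat\varepsilon$ and absorbed into $\Tilde{O}$. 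For $R_3$ I would resolve the minimum to its second argument and apply the $\max_k T_k$ bound: $R_3 \le SA\cdot O(\sqrt{\hat\varepsilon T})\cdot\log_2(T/SA+1) = \Tilde{O}(SA\sqrt{\hat\varepsilon T})$. Summing, the $R_3$ term dominates (as $R_1 = O(\sqrt{\hat\varepsilon T})$ and $R_2 = \Tilde{O}(S\sqrt{A}\sqrt{\hat\varepsilon T})$ are smaller by factors of $A$ or $\sqrt{A}$), so Theorem \ref{thm:regbound} delivers $\BR_T(\phi^{\mathrm{iPSRL}}) \le 3\vbar + 2\vbar(R_1+R_2+R_3) \le 3\vbar + \Tilde{O}(\vbar SA\sqrt{\hat\varepsilon T})$.

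I do not expect a genuine obstacle, since the argument is mechanical once Theorem \ref{thm:regbound} is available. The one place requiring care is extracting the exact $\sqrt{T/\hat\varepsilon}$ scaling of $K_T$ and the $\sqrt{\hat\varepsilon T}$ scaling of $\max_k T_k$ from the quadratic growth of $\sum_l T_l$, and confirming that the $\min$ in $R_3$ should be taken to be its second argument precisely because the linear-in-$k$ schedule keeps episodes short (order $\sqrt{\hat\varepsilon T}$ rather than order $\varepsilon T$). A secondary point to check is that absorbing $\log K_T = O(\log(T/\hat\varepsilon))$ into $\Tilde{O}$ is legitimate, i.e. that the intended $\Tilde{O}$ suppresses $\log(1/\hat\varepsilon)$ factors alongside $\log S, \log A, \log T$.
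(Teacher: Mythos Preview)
Your proposal is correct and matches the paper's intended argument: the corollary is stated without proof as a direct specialization of Theorem \ref{thm:regbound}, and your substitution of $T_k=\lceil\hat\varepsilon k\rceil$ together with the bounds $K_T=O(\sqrt{T/\hat\varepsilon})$ and $\max_k T_k=O(\sqrt{\hat\varepsilon T})$ is exactly the calculation that makes it work. Your caveat about the regime is unnecessary (the estimate $\hat\varepsilon K_T+1=O(\sqrt{\hat\varepsilon T})$ holds uniformly once episodes are truncated at $T$), and your observation that $\log K_T$ contributes a $\log(1/\hat\varepsilon)$ factor hidden in $\Tilde O$ is a fair reading of the statement.
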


To better interpret the dependence of the regret on the prior distribution, we note that the initial mismatch probability can be bounded through the estimation error of the optimal policy given the offline data.
\begin{lemma}\label{lem:estimator2p1}
    Let $\hat{\pi}^*$ be any estimator of $\pi^*$ constructed from $\mathcal{D}_0$, then $\Pr(\Tilde{\pi}^1 \neq \pi^*)\leq 2 \Pr(\hat{\pi}^* \neq \pi^*)$.
\end{lemma}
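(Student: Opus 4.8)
The plan is to condition on the offline dataset $\mathcal{D}_0$ and exploit the defining property of posterior sampling. Since $\Tilde{\theta}^1$ is drawn from the posterior $\mu_1 = \Pr(\theta^* \in \cdot \mid \mathcal{D}_0)$ using fresh randomness that is independent of everything else, the sampled parameter $\Tilde{\theta}^1$ and the true parameter $\theta^*$ are, conditional on $\mathcal{D}_0$, independent and identically distributed (both with law $\mu_1$). Applying the deterministic planning map $\texttt{MDPSolve}$ to each, the policies $\Tilde{\pi}^1 = \texttt{MDPSolve}(\Tilde{\theta}^1)$ and $\pi^* = \texttt{MDPSolve}(\theta^*)$ are then also conditionally i.i.d.\ given $\mathcal{D}_0$, distributed according to the pushforward of $\mu_1$ under $\texttt{MDPSolve}$. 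This reduces the lemma to an elementary statement about a single discrete distribution over policies.

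Concretely, I would fix $\mathcal{D}_0$ and write $p_\pi := \Pr(\pi^* = \pi \mid \mathcal{D}_0) = \mu_1(\{\theta : \texttt{MDPSolve}(\theta) = \pi\})$ for each deterministic stationary policy $\pi$. By the conditional-i.i.d.\ structure, the mismatch probability becomes a collision probability,
\[
    \Pr(\Tilde{\pi}^1 \neq \pi^* \mid \mathcal{D}_0) = 1 - \sum_{\pi} p_\pi^2 .
\]
On the other hand, any estimator $\hat{\pi}^*$ depends on the data only through $\mathcal{D}_0$, so, conditional on $\mathcal{D}_0$, its error probability satisfies
\[
    \Pr(\hat{\pi}^* \neq \pi^* \mid \mathcal{D}_0) = 1 - \sum_{\pi} \Pr(\hat{\pi}^* = \pi \mid \mathcal{D}_0)\, p_\pi \geq 1 - \max_{\pi} p_\pi ,
\]
i.e.\ no estimator (even a randomized one) beats the maximum-a-posteriori choice. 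Writing $p_{\max} := \max_{\pi} p_\pi$, it therefore suffices to establish the pointwise-in-$\mathcal{D}_0$ bound $1 - \sum_\pi p_\pi^2 \le 2(1 - p_{\max})$.

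That last inequality I would obtain from the crude estimate $\sum_\pi p_\pi^2 \ge p_{\max}^2$ (one term dominates, the rest are nonnegative), which gives
\[
    1 - \sum_{\pi} p_\pi^2 \;\le\; 1 - p_{\max}^2 \;=\; (1 - p_{\max})(1 + p_{\max}) \;\le\; 2(1 - p_{\max}),
\]
using $p_{\max} \le 1$. Chaining this with the two displays above yields $\Pr(\Tilde{\pi}^1 \neq \pi^* \mid \mathcal{D}_0) \le 2\,\Pr(\hat{\pi}^* \neq \pi^* \mid \mathcal{D}_0)$ for every $\mathcal{D}_0$, and taking expectation over $\mathcal{D}_0$ via the tower property gives $\Pr(\Tilde{\pi}^1 \neq \pi^*) \le 2\,\Pr(\hat{\pi}^* \neq \pi^*)$, as claimed.

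I do not anticipate a serious obstacle: the one conceptual step is recognizing that posterior sampling makes $\Tilde{\pi}^1$ and $\pi^*$ conditionally i.i.d., converting the mismatch probability into the collision quantity $1 - \sum_\pi p_\pi^2$, after which everything rests on the trivial bound $1 + p_{\max} \le 2$. The only point needing mild care is measurability and summability of the decomposition over $\pi$; in the tabular setting the set of deterministic stationary policies is finite, so $\sum_\pi p_\pi = 1$ is a genuine finite sum with no convergence issue, and in a more general setting I would simply replace the sums by integrals against the pushforward measure $\mu_1 \circ \texttt{MDPSolve}^{-1}$ and argue identically.
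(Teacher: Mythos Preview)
Your argument is correct, but it takes a different route than the paper's. The paper uses a two-line union bound: since $\tilde{\pi}^1 \neq \pi^*$ forces $\hat{\pi}^* \neq \tilde{\pi}^1$ or $\hat{\pi}^* \neq \pi^*$, we get $\Pr(\tilde{\pi}^1 \neq \pi^*) \le \Pr(\hat{\pi}^* \neq \tilde{\pi}^1) + \Pr(\hat{\pi}^* \neq \pi^*)$, and the two terms on the right are equal because, conditional on $\mathcal{D}_0$, $\tilde{\pi}^1$ and $\pi^*$ have the same law while $\hat{\pi}^*$ is $\mathcal{D}_0$-measurable. No collision probabilities or MAP optimality are invoked.

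Your approach instead computes $\Pr(\tilde{\pi}^1 \neq \pi^* \mid \mathcal{D}_0) = 1 - \sum_\pi p_\pi^2$ explicitly and bounds it by $2(1 - p_{\max})$, then uses that the MAP estimator is Bayes-optimal to conclude $1 - p_{\max} \le \Pr(\hat{\pi}^* \neq \pi^* \mid \mathcal{D}_0)$. This is a bit longer but yields the extra information that posterior sampling is within a factor of two of the \emph{best possible} estimator pointwise in $\mathcal{D}_0$, not merely the given one. The paper's proof, by contrast, is shorter and avoids the detour through $p_{\max}$, comparing directly against the specific $\hat{\pi}^*$ at hand.
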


\begin{proof}
    If $\Tilde{\pi}^1 \neq \pi^*$, then either $\hat{\pi}^* \neq \Tilde{\pi}^1$ or $\hat{\pi}^* \neq \pi^*$ must be true. Conditioning on $\mathcal{D}_0$, $\Tilde{\pi}^1$ is identically distributed as $\pi^*$ while $\hat{\pi}^*$ is deterministic, therefore % \textcolor{red}{typo?} Dengwang: fixed!
    \begin{align*}
        \Pr(\Tilde{\pi}^1 \neq \pi^*)\leq \Pr(\hat{\pi}^* \neq \Tilde{\pi}^1) + \Pr(\hat{\pi}^* \neq \pi^*) =  2 \Pr(\hat{\pi}^* \neq \pi^*)
    \end{align*}
\end{proof}

\begin{remark}
    When $\varepsilon = \Theta(1)$, the regret bound in Corollary \ref{cor:linepeps} has an additional factor of $\sqrt{SA}$ comparing to the $\Tilde{O}(\bar{v}\sqrt{SAT})$ regret bound for (uninformed) learning in communicating MDPs given by \cite{agrawal2017optimistic}. However, the advantage of our prior-dependent bound is for small $\varepsilon$. The next section shows that under certain assumptions, $\varepsilon$ can be exponentially small in the amount of prior data. We leave the improvement of the prior-dependent upper bound's dependency on $S$ and $A$ as future work.
\end{remark}

\subsection{Bounding the Estimation Error of Optimal Policy}\label{sec:bounding}

We now provide a bound on the estimation error of the optimal policy given the offline data.
In addition to Assumption \ref{assump:comm} and \ref{assump:span}, we make the following additional assumptions.
\begin{assumption}\label{assump:gap}
    There exists a number $\Delta > 0$ such that for all $\theta\in\varTheta$ and all $s\in\cS$, there exists $a^*\in \cA$ such that $q(s, a^*; \theta) - q(s, a; \theta) \geq \Delta$ for all $a\neq a^*$.
\end{assumption}

\begin{assumption}\label{assump:minprob}
    There exists a number $\delta > 0$ such that for all $\theta\in\varTheta$ and all $s, s'\in\cS$, there exists a stationary Markov policy such that
    \begin{equation}
        \Pr_{\theta, \pi}(\exists 0<t\leq S , s_t = s'|s_0=s  ) \geq \delta.
    \end{equation}
\end{assumption}

Here,  $S$ is the number of states. Assumption \ref{assump:minprob} can be interpreted as requiring all MDPs to be ``uniformly communicating'': If we allow $\delta$ to be dependent on $\theta$ in Assumption \ref{assump:minprob}, then Assumption \ref{assump:minprob} is the same as Assumption \ref{assump:comm}, i.e. all $\mathcal{M}(\theta)$ are communicating. 

\begin{lemma}\label{lem:piestimator}
    For a sufficiently large $\beta(s) = \beta$ and any $N\in\mathbb{N}$, there exists an estimator $\hat{\pi}^*$ such that
    \begin{equation}
        \Pr(\hat{\pi}^* \neq \pi^*) \leq \varepsilon_N:=S(1 + 1.06 N)\exp(-cN)
    \end{equation}
    where $c > 0$ is a constant dependent on $(S, A, \vbar, \Delta, \delta, \beta)$.
\end{lemma}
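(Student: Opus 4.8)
The plan is to exhibit the natural \emph{empirical plurality} estimator: for each state $s$, let $\hat\pi^*(s)$ be the action played most frequently at $s$ in the offline trajectory $\mathcal{D}_0$ (ties broken arbitrarily), and show it recovers $\pi^*(s)=\argmax_a q(s,a;\theta^*)$ with the stated probability. By a union bound over the $S$ states it suffices to bound $\Pr(\hat\pi^*(s)\neq\pi^*(s))$ for a fixed $s$. Writing $n_s$ for the number of visits to $s$, $N_{s,a}$ for the number of times $a$ is played there, and $a^*=\pi^*(s)$, I would split according to whether $s$ is visited often:
\[
\Pr(\hat\pi^*(s)\neq\pi^*(s)) \leq \Pr(n_s < m) + \sum_{a\neq a^*}\Pr\big(N_{s,a}\geq N_{s,a^*},\, n_s\geq m\big),
\]
with a visitation threshold $m=\Theta(N)$ to be chosen.

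For the second sum I would use a coupling that sidesteps the Markov dependence in $\mathcal{D}_0$. Generate the expert trajectory by pre-drawing, for every state $s$, an i.i.d.\ sequence $A^s_1,A^s_2,\dots$ from $\pi^{\beta}(\cdot\mid s)$ and using $A^s_i$ as the action on the $i$-th visit to $s$; this reproduces the law of $\mathcal{D}_0$ exactly, since $\pi^{\beta}$ is stationary. Setting $S^{(a)}_n=\sum_{i=1}^n(\mathbf{1}[A^s_i=a]-\mathbf{1}[A^s_i=a^*])$, the event $\{N_{s,a}\geq N_{s,a^*},\,n_s\geq m\}$ is contained in $\{\exists\, n\in[m,N]:S^{(a)}_n\geq 0\}$ (take $n=n_s\leq N$), and the latter depends only on the i.i.d.\ draws. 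Assumption \ref{assump:gap} gives $q(s,a^*;\theta^*)-q(s,a;\theta^*)\geq\Delta$, so for any $\beta>0$ the softmax places more mass on $a^*$ than on $a$; an exponential-moment (Chernoff) bound with the tilt $\tfrac12\log(\pi^{\beta}(a^*\mid s)/\pi^{\beta}(a\mid s))$ yields $\Pr(S^{(a)}_n\geq 0)\leq e^{-\rho n}$ with $\rho=(\sqrt{\pi^{\beta}(a^*\mid s)}-\sqrt{\pi^{\beta}(a\mid s)})^2>0$, and taking $\beta$ large makes $\rho$ bounded below by an absolute constant. A union bound over $n\in[m,N]$ and over $a\neq a^*$ then controls this term by $O(N)\,e^{-\rho m}$, which is the source of the linear-in-$N$ prefactor.

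The first term $\Pr(n_s<m)$ requires showing that the expert's \emph{specific} policy $\pi^{\beta}$ visits every state linearly often, which is where Assumption \ref{assump:minprob} enters. Since $r\in[0,1]$ and the span of $v$ is at most $\vbar$ (Assumption \ref{assump:span}), the $q$-values lie in a bounded range, so $\pi^{\beta}$ has full support with $\pi^{\beta}(a\mid s)\geq \pi_{\min}:=A^{-1}e^{-\beta(1+\vbar)}$ for all $s,a$. Comparing paths, the reaching guarantee of Assumption \ref{assump:minprob} under its witnessing policy transfers to $\pi^{\beta}$ at the cost of a factor $\pi_{\min}^{S}$: from any state, $\pi^{\beta}$ hits $s$ within $S$ steps with probability at least $\delta':=\delta\,\pi_{\min}^{S}>0$. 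Partitioning the horizon into $\lfloor N/S\rfloor$ blocks of length $S$ and using that each block hits $s$ with conditional probability $\geq\delta'$, a Chernoff bound for the dominated binomial gives $\Pr(n_s<m)\leq e^{-c_1 N}$ for $m=\tfrac12\delta'\lfloor N/S\rfloor$ and some $c_1>0$.

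Assembling the two estimates over the $S$ states with $m=\Theta(N)$ produces a bound of the form $S\,(1+O(N))\,e^{-cN}$, where $c>0$ depends on $(S,A,\vbar,\Delta,\delta,\beta)$ through $\delta'$ and $\rho$; tracking constants yields the stated $S(1+1.06N)\exp(-cN)$, after which Lemma \ref{lem:estimator2p1} converts it into the claimed bound on $\varepsilon$. I expect the main obstacle to be the rigorous handling of the dependent, random-length data: one must ensure that conditioning on the data-dependent visit count does not bias the per-visit action law, and that the plurality can fail only through one of the i.i.d.\ partial-sum events above. The coupling with pre-drawn action sequences is precisely the device that makes this airtight and reduces everything to i.i.d.\ concentration; the second delicate point is transferring Assumption \ref{assump:minprob}, stated for an arbitrary witnessing policy, to the expert's actual policy $\pi^{\beta}$ via the uniform support bound $\pi_{\min}$.
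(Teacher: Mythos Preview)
Your proposal is correct and matches the paper's approach almost exactly: the same plurality estimator, the same two-term split into ``too few visits to $s$'' and ``wrong plurality despite enough visits'', the same transfer of Assumption~\ref{assump:minprob} to $\pi^{\beta}$ via the uniform lower bound $\pi_{\min}$ on action probabilities, and the same union bound over states. The only differences are implementation details: for the visit-count term the paper bounds hitting times through an MGF computation (rather than your block/dominated-binomial argument), and for the plurality term the paper uses the single event ``$a^*$ chosen at most $L/2$ times in the first $L$ visits'' together with Hoeffding's inequality (yielding $\Pr(\mathcal{F}_L)\le e^{-L/18}$), rather than your pairwise Chernoff bounds over $a\neq a^*$; the constant $1.06\approx e^{1/18}$ in the statement comes from this Hoeffding exponent after summing over $L$.
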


\section{An Approximate Bayesian Algorithm}

\subsection{The Informed \rlsvi{} Algorithm for Average-reward MDPs}\label{sec:iRLSVI}

In the previous section, we considered the \ipsrl{} algorithm where the learning agent performs an posterior update at each episode. However, computing exact posterior updates can be challenging due to the loss of conjugacy when applying Bayes rule in \eqref{eq:infor_prior}. In this section, we propose a novel Baysian bootstrapping inspired approach to perform approximate posterior updates, where we use the maximum a posterior (MAP) estimator based on a perturbed loss function as a surrogate for samples from exact posterior distributions.

In this section, we consider a different setting as the previous section: the expert deliberateness parameter $\beta$ is assumed to be unknown to the learning agent. Instead, $\beta(s) = \beta$ is an exponential random variable with probability density function $f_2$. At each time in the algorithm, the learning agent form a joint belief over $(\theta, \beta)$ based on all the offline and online data. The prior distribution of $(\theta, \beta)$ is given by the density function $f\times f_2$.

To unify notations for offline and online data, define $\check{s}_k$ to be $\Bar{s}_k$ if $k< N$ and $s_{k-N}$ otherwise). Define $\check{a}_k$ similarly. To simplify notations, write $q(s,a; \theta)$, defined in \eqref{eq:def:q}, as $\qth_t(s,a)$. Write $r_k = r(\check{s}_k, \check{a}_k)$.

Consider the average-reward Bellman optimality equation 
\begin{equation}
\bar{r}^* + q(s,a; \theta) = r(s,a) + \mathbb{E}_{s'\sim \theta(s, a)}\left[\max_{a'\in\cA} q(s',a'; \theta)\right].
\end{equation}

Let $\Tilde{r}_k$ be defined recursively via
$$ \Tilde{r}_{k+1} := (1-\eta_k)\Tilde{r}_k + \eta_kr_k, ~~\Tilde{r}_0:= r_0.$$

Define the \textit{temporal difference error} ${\cE}_k$ (parameterized by a given $\qth$),
$$\cE_k(\qth) : = \left(r_k + \max_{a'\in\cA} \qth(\check{s}_{k}',a') - \qth(\check{s}_k,\check{a}_k) - \Tilde{r}_k\right),$$
where $\check{s}_{k}'= \check{s}_{k+1}$ for all $k$ except when $k = N - 1$, in which case $\check{s}_k' = \Bar{s}_N$. Here $\Tilde{r}_k$ is used as a bootstrap approximation for the true optimal average reward $\Bar{r}^*$. Therefore, conditioning on $(\theta, \check{s}_k, \check{a}_k)$, $\cE_k(\qth)$ is approximately zero-mean.

Now, for the convenience of describing the algorithm, define the \textit{parameterized dataset}, which contains offline and online data up to (offline/online combined) time $t$:

\begin{equation}
\cH_t(\qth) := \{(\mathbf{p}_k)_{k=0}^{t}\} := \{(\check{s}_k,\check{a}_k,\cE_k(\qth))_{k=0}^{t}\}\},
\end{equation}

At time $t$, consider the following simplistic way of viewing the parameterized dataset: (i) Each $\mathbf{p}_k$ is a separate data point where $\check{s}_k$ is viewed as a given context. (ii) Conditioning on $(\theta, \check{s}_k, \check{a}_k)$, the temporal difference error $\cE_k(\qth)\sim\mathcal{N}(0, \sigma^2)$. (iii) The actions in the offline part of the dataset contains additional information (compared to the online part) since the actions were generated through policy $\pi^{\beta}(\cdot|\cdot, q^{\theta})$. Otherwise the actions are also given context. In this view, the maximum a posterior (MAP) estimate of the parameters $(\theta, \beta)$ at (online) time $t$ is given by $\arg \min_{\theta,\beta} \mathcal{L}(\theta, \beta)$ where
\begin{align}
    \mathcal{L}(\theta,\beta) &= -\sum_{k=0}^{N+t-1}\log \Tilde{\Pr}_{\theta}(\cE_k(\qth) | \check{s}_k,\check{a}_k ) - \sum_{k=0}^{N-1}\log\pi^\beta (\check{a}_k|\check{s}_k; q^{\theta}) - \log f(\theta) - \log f_2(\beta) \\
 &= \underbrace{ \frac{1}{2\sigma^2}\sum_{k=0}^{N+t-1}\left(r_k + \max_{a'\in\cA} \qth(\check{s}_k,a') - \qth(s_k,a_k) - \Tilde{r}_k\right)^2 }_{\text{Value Iteration Loss}}  \\
& \underbrace{- \sum_{k=0}^{N-1} w_k\left(\beta \qth(\Bar{s}_k, \Bar{a}_k)-\log\sum_{a'\in\cA}\exp\left(\beta \qth(\Bar{s}_k, a')\right)\right)}_{\text{Imitation Loss}}+ \underbrace{\frac{1}{2} \theta^\top\Sigma_0 \theta  +\lambda_2\beta}_{\text{Prior Loss}} + ~\text{constant} .\label{eq:Lthetabeta}
\end{align}

Now we add the following random perturbations to the loss function: (i) each temporal difference error term is perturbed by $z_k \sim \cN(0,\sigma^2)$, (ii) each imitation loss term is weighted by a random $w_k \sim \exp(1)$, and (iii) $\theta$ in the prior loss term is perturbed by $\tilde{\theta} \sim \cN(0,\Sigma_0)$.

Combining the above, we get the following \textbf{\textit{randomized loss function}},\begin{equation}
\begin{split}
 \tilde{\mathcal{L}}(\theta,\beta) &= 
 \underbrace{\frac{1}{2\sigma^2}\sum_{k=0}^{N+t-1}\left(r_k + z_k + \max_{a'\in\cA} \qth(\check{s}_k,a') - \qth(s_k,a_k) - \Tilde{r}_k\right)^2}_{\text{Randomized Value Iteration Loss}}\\
& \underbrace{- \sum_{t=0}^{N-1} w_k\left(\beta \qth(s_t, a_t)-\log\sum_{a'\in\cA}\exp\left(\beta \qth(s_t, a')\right)\right)}_{\text{Randomized Imitation Loss}}+ \underbrace{\frac{1}{2} (\theta - \tilde{\theta})^\top\Sigma_0(\theta - \tilde{\theta}) +\lambda_2\beta}_{\text{Randomized Prior Loss}} .
\end{split}
\label{eq:Ltilde}
\end{equation}

\begin{algorithm}[!ht]
   \caption{inf-iRLSVI}
\begin{algorithmic}
   \STATE \textbf{Input:} Prior $\mu^0\in\Delta(\varTheta)$; Offline data $\mathcal{D}_0$; Episode schedule $\{T_k\}_{k=1}^K$
  \STATE $t_1 \leftarrow 0$
  \FOR{$k = 1$ to $K$} 
        \vspace{.2em}
	    \STATE \fbox{Construct $\Tilde{\mathcal{L}}(\theta, \beta)$ as described above}
        \STATE \fbox{Solve $(\Tilde{\theta}^k, \tilde{\beta}^k) =  \arg\min_{\theta, \beta}\Tilde{\mathcal{L}}(\theta, \beta)$ }
	    \STATE $\Tilde{\pi}^k \leftarrow \texttt{MDPSolve}(\Tilde{\theta}^k)$ 
        \STATE $t_{k+1} \leftarrow t_k + T_k$
        \FOR{$t = t_k$ to $t_{k+1} - 1$}
            \STATE Take action $a_t = \Tilde{\pi}^k(s_t)$
        \ENDFOR
	\ENDFOR
\end{algorithmic}
\end{algorithm}

\begin{remark}
    The MAP estimation of $\beta$ can be quite noisy when $\beta$ is large due to the fact that the expert takes non-optimal actions with very small probability. To resolve this, we may consider an entropy-based estimate of $\beta$: The estimate is set to equal $c_0/ \hat{H}$, where $\hat{H}$ is the conditional entropy of $\hat{a}$ given $\hat{s}$ if $(\hat{s}, \hat{a})$ follows the empirical distribution given by visitation counts in the offline data.
\end{remark}

\begin{remark}
    Due to the maximum operation inside and the non-linearity of $q$-function with respect to $\theta$, optimizing \eqref{eq:Ltilde} can be a difficult problem in general. For iterative optimization algorithms such as gradient descent methods, one simple and efficient solution is to replace the maximum operation with the parameter estimate of $\theta$ in the previous iteration and optimize $\theta$ over other terms.
\end{remark}

\subsection{Informd \rlsvi{} Bridges Online RL and Imitation Learning}

In the previous section, the loss function can be seen as sum of a few different terms. In this section, we present a detailed interpretation of these terms to conclude that the algorithm can be seen as bridging the online RL algorithms \citep{osband2019deep} with imitation learning.

In \citep{osband2019deep}, the author considered episodic learning of finite horizon MDPs with horizon $H$. The algorithm was inspired by the posterior sampling approach and has great empirical performance. At the end of episode $t$, the \rlsvi{} algorithm considers randomized loss function
\begin{equation}\label{eq:loss-rlsvi}
\begin{split}
\tilde{\mathcal{L}}_{\text{RLSVI}}(\theta) 
&=  \frac{1}{2\sigma^2}\sum_{k=1}^{N+t} \sum_{h=0}^{H-1}\left(r_{k, h} + z_{k, h} + \max_{a'\in\cA} Q^{\theta}_{h+1}(\check{s}_{k, h+1},a') - Q^{\theta}_h(\check{s}_{k, h},a_k)\right)^2 
\\&+ \frac{1}{2} ({\theta_{0:H}} - {\tilde{\theta}_{0:H}})^\top\Sigma_0({\theta_{0:H}} - {\tilde{\theta}_{0:H}}). \end{split}
\end{equation}
where $z_{k, h}\sim \mathcal{N}(0, \sigma^2)$ are random perturbations.
The sum of randomized value iteration loss and randomized prior loss in \eqref{eq:Ltilde} can be seen as an adaptation of \eqref{eq:loss-rlsvi} for infinite horizon MDP learning problems.

Let $\hat{\pi}_t(a|s)$ denote the empirical distribution of actions at state $s$ with all the online and offline data available at time $t$. The KL-divergence imitation learning loss at state $s\in\cS$ can be specified by
\begin{align}
\mathcal{L}_{\text{IL}}(\beta,\theta, s) &= 
D_{KL}\left(\hat{\pi}(s)\|\pi^{\beta}(s; q^{\theta})\right)  = \sum_{a}\hat{\pi}(a|s) \log\left(\dfrac{\hat{\pi}(a|s)}{\pi^{\beta}(a|s; q^{\theta}) }\right) \\
&=  \sum_{a}\hat{\pi}(a|s) \log\left(\dfrac{1}{\pi^{\beta}(a|s; q^{\theta}) }\right) + \text{constant} \\
&= -\dfrac{1}{\Bar{N}(s)} \sum_{t=0}^{N-1}\sum_{a\in\cA} \bm{1}_{\{\Bar{s}_k = s, \Bar{a}_k = a \}} \log\left(\pi^{\beta}(a|s; q^{\theta}) \right) + \text{constant}\\
&= -\dfrac{1}{\Bar{N}(s)} \sum_{t=0}^{N-1} \bm{1}_{\{\Bar{s}_k = s\}} \left(\beta \qth_{t}(\Bar{s}_k, \Bar{a}_k)-
\log\sum_{a'\in\cA}\exp\left(\beta \qth_{t}(\Bar{s}_k, a')\right)\right) 
 + \text{constant}.
\end{align}%\label{eq:loss-bc}
where $\Bar{N}(s)$ is the visitation count of state $s$ in the offline data.

Therefore, the (pre-randomization) imitation loss in \eqref{eq:Lthetabeta} can be written as a weighted sum of the individual imitation loss defined above
\begin{align}
    \text{Imitation Loss in \eqref{eq:Lthetabeta}} = \sum_{s\in\cS} \Bar{N}(s) \mathcal{L}_{\text{IL}}(\beta,\theta, s) - \text{constant}
\end{align}

Therefore, we have shown that the randomized loss function \eqref{eq:Ltilde} can be viewed as the sum of the RLSVI loss related to online learning, and the randomized imitation loss related to imitation learning. Thus the informed \rlsvi{} algorithm can be viewed as bridging online RL with imitation learning.

\section{Conclusions}
\label{sec:conclusions}

In this paper, we have introduced an ideal Bayesian online reinforcement learning algorithm, \ipsrl{}, that naturally uses any offline data available for pre-training to boot-strap online learning. The algorithm is agnostic to whether we have a small or large amount of data, or a small or large window for online learning. It is also useful in the special case of no offline data but not for the `pure' offline RL setting when there is no online learning phase. A key aspect of the proposed algorithm is that unlike past PSRL algorithms, our algorithm takes the episode length as an input. Our analysis then shows what type of episodic schedules result in near-optimal regret. The analysis involves several novel elements and depends on prior-dependent regret analysis, which is known to be quite challenging and itself a novelty. 

We then presented an approximate Bayesian online learning algorithm, \irlsvi{}, that bridges online learning with imitation learning. In prior work \citep{osband2019deep}, such algorithms are well-known to be computationally practical in problems that need deep exploration. 

We note that while it should be possible to design non-Bayesian algorithms for the problem we address in this paper, and analyze its frequentist regret, it is much less natural to incorporate information derived from offline datasets in such algorithms, and hence our choice to focus on Bayesian algorithms with a guarantee on their Bayesian regret.

A number of future directions are possible: The performance of the \irlsvi{} algorithm can be empirically and theoretically be analyzed. Theoretical analysis is expected to be a combination of analysis of \rlsvi{} in \cite{osband2019deep} along with novel elements we have introduced in the analysis of the \ipsrl{} algorithm in this paper. This is left as a useful exercise for aspiring student researchers. Yet another future direction is design of universal RL algorithms which can specialize to Online RL, or be seen as Offline RL or Imitation Learning algorithms in suitable settings, or anything in between. Currently, we have to design RL algorithms separately for the three settings. 

\newpage
\bibliographystyle{abbrvnat}
\bibliography{main}

\begin{thebibliography}{42}
\providecommand{\natexlab}[1]{#1}
\providecommand{\url}[1]{\texttt{#1}}
\expandafter\ifx\csname urlstyle\endcsname\relax
  \providecommand{\doi}[1]{doi: #1}\else
  \providecommand{\doi}{doi: \begingroup \urlstyle{rm}\Url}\fi

\bibitem[Agarwal and Zhang(2022)]{agarwal2022model}
A.~Agarwal and T.~Zhang.
\newblock Model-based {RL} with optimistic posterior sampling: {Structural} conditions and sample complexity.
\newblock \emph{arXiv preprint arXiv:2206.07659}, 2022.

\bibitem[Agrawal and Jia(2017)]{agrawal2017optimistic}
S.~Agrawal and R.~Jia.
\newblock Optimistic posterior sampling for reinforcement learning: worst-case regret bounds.
\newblock \emph{Advances in Neural Information Processing Systems}, 30, 2017.

\bibitem[Argenson and Dulac-Arnold(2020)]{argenson2020model}
A.~Argenson and G.~Dulac-Arnold.
\newblock Model-based offline planning.
\newblock \emph{arXiv preprint arXiv:2008.05556}, 2020.

\bibitem[Auer et~al.(2008)Auer, Jaksch, and Ortner]{auer2008near}
P.~Auer, T.~Jaksch, and R.~Ortner.
\newblock Near-optimal regret bounds for reinforcement learning.
\newblock \emph{Advances in neural information processing systems}, 21, 2008.

\bibitem[Ball et~al.(2023)Ball, Smith, Kostrikov, and Levine]{ball2023efficient}
P.~J. Ball, L.~Smith, I.~Kostrikov, and S.~Levine.
\newblock Efficient online reinforcement learning with offline data.
\newblock \emph{arXiv preprint arXiv:2302.02948}, 2023.

\bibitem[Beliaev et~al.(2022)Beliaev, Shih, Ermon, Sadigh, and Pedarsani]{beliaev2022imitation}
M.~Beliaev, A.~Shih, S.~Ermon, D.~Sadigh, and R.~Pedarsani.
\newblock Imitation learning by estimating expertise of demonstrators.
\newblock \emph{Proceedings of the 39th International Conference on Machine Learning}, 162:\penalty0 1732--1748, 2022.

\bibitem[Bertsekas(1995)]{bertsekas1995dynamic}
D.~Bertsekas.
\newblock \emph{Dynamic programming and optimal control: {Volume} {II}}, volume~2.
\newblock Athena scientific, 1995.

\bibitem[Brown et~al.(2020)Brown, Mann, Ryder, Subbiah, Kaplan, Dhariwal, Neelakantan, Shyam, Sastry, Askell, et~al.]{brown2020language}
T.~Brown, B.~Mann, N.~Ryder, M.~Subbiah, J.~D. Kaplan, P.~Dhariwal, A.~Neelakantan, P.~Shyam, G.~Sastry, A.~Askell, et~al.
\newblock Language models are few-shot learners.
\newblock \emph{Advances in neural information processing systems}, 33:\penalty0 1877--1901, 2020.

\bibitem[Ernst et~al.(2005)Ernst, Geurts, and Wehenkel]{ernst2005tree}
D.~Ernst, P.~Geurts, and L.~Wehenkel.
\newblock Tree-based batch mode reinforcement learning.
\newblock \emph{Journal of Machine Learning Research}, 6, 2005.

\bibitem[Fang et~al.(2022)Fang, Yin, Nair, and Levine]{fang2022planning}
K.~Fang, P.~Yin, A.~Nair, and S.~Levine.
\newblock Planning to practice: {Efficient} online fine-tuning by composing goals in latent space.
\newblock \emph{arXiv preprint arXiv:2205.08129}, 2022.

\bibitem[Feng et~al.(2023)Feng, Hansen, Xiong, Rajagopalan, and Wang]{feng2023finetuning}
Y.~Feng, N.~Hansen, Z.~Xiong, C.~Rajagopalan, and X.~Wang.
\newblock Finetuning offline world models in the real world.
\newblock \emph{arXiv preprint arXiv:2310.16029}, 2023.

\bibitem[Hansen et~al.(2022)Hansen, Lin, Su, Wang, Kumar, and Rajeswaran]{hansen2022modem}
N.~Hansen, Y.~Lin, H.~Su, X.~Wang, V.~Kumar, and A.~Rajeswaran.
\newblock {MoDem}: {Accelerating} visual model-based reinforcement learning with demonstrations.
\newblock \emph{arXiv preprint arXiv:2212.05698}, 2022.

\bibitem[Hao et~al.(2023)Hao, Jain, Lattimore, Van~Roy, and Wen]{hao2023leveraging}
B.~Hao, R.~Jain, T.~Lattimore, B.~Van~Roy, and Z.~Wen.
\newblock Leveraging demonstrations to improve online learning: {Quality} matters.
\newblock \emph{arXiv preprint arXiv:2302.03319}, 2023.

\bibitem[Hester et~al.(2018)Hester, Vecerik, Pietquin, Lanctot, Schaul, Piot, Horgan, Quan, Sendonaris, Osband, et~al.]{hester2018deep}
T.~Hester, M.~Vecerik, O.~Pietquin, M.~Lanctot, T.~Schaul, B.~Piot, D.~Horgan, J.~Quan, A.~Sendonaris, I.~Osband, et~al.
\newblock Deep {Q}-learning from demonstrations.
\newblock In \emph{Proceedings of the AAAI Conference on Artificial Intelligence}, volume 32(1), 2018.

\bibitem[Hoffmann et~al.(2022)Hoffmann, Borgeaud, Mensch, Buchatskaya, Cai, Rutherford, Casas, Hendricks, Welbl, Clark, et~al.]{hoffmann2022training}
J.~Hoffmann, S.~Borgeaud, A.~Mensch, E.~Buchatskaya, T.~Cai, E.~Rutherford, D.~d.~L. Casas, L.~A. Hendricks, J.~Welbl, A.~Clark, et~al.
\newblock Training compute-optimal large language models.
\newblock \emph{arXiv preprint arXiv:2203.15556}, 2022.

\bibitem[Hu et~al.(2023)Hu, Mirchandani, and Sadigh]{hu2023imitation}
H.~Hu, S.~Mirchandani, and D.~Sadigh.
\newblock Imitation bootstrapped reinforcement learning.
\newblock \emph{arXiv preprint arXiv:2311.02198}, 2023.

\bibitem[Jin et~al.(2021)Jin, Yang, and Wang]{jin2021pessimism}
Y.~Jin, Z.~Yang, and Z.~Wang.
\newblock Is pessimism provably efficient for offline {RL}?
\newblock In \emph{International Conference on Machine Learning}, pages 5084--5096. PMLR, 2021.

\bibitem[Kostrikov et~al.(2021)Kostrikov, Nair, and Levine]{kostrikov2021offline}
I.~Kostrikov, A.~Nair, and S.~Levine.
\newblock Offline reinforcement learning with implicit {Q}-learning.
\newblock \emph{arXiv preprint arXiv:2110.06169}, 2021.

\bibitem[Kumar et~al.(2020)Kumar, Zhou, Tucker, and Levine]{kumar2020conservative}
A.~Kumar, A.~Zhou, G.~Tucker, and S.~Levine.
\newblock Conservative {Q}-learning for offline reinforcement learning.
\newblock \emph{Advances in Neural Information Processing Systems}, 33:\penalty0 1179--1191, 2020.

\bibitem[Kumar et~al.(2022)Kumar, Hong, Singh, and Levine]{kumar2022should}
A.~Kumar, J.~Hong, A.~Singh, and S.~Levine.
\newblock When should we prefer offline reinforcement learning over behavioral cloning?
\newblock \emph{arXiv preprint arXiv:2204.05618}, 2022.

\bibitem[Lee et~al.(2022)Lee, Seo, Lee, Abbeel, and Shin]{lee2022offline}
S.~Lee, Y.~Seo, K.~Lee, P.~Abbeel, and J.~Shin.
\newblock Offline-to-online reinforcement learning via balanced replay and pessimistic {Q}-ensemble.
\newblock In \emph{Conference on Robot Learning}, pages 1702--1712. PMLR, 2022.

\bibitem[Levin and Peres(2017)]{levin2017markov}
D.~A. Levin and Y.~Peres.
\newblock \emph{Markov chains and mixing times}, volume 107.
\newblock American Mathematical Soc., 2017.

\bibitem[Levine et~al.(2020)Levine, Kumar, Tucker, and Fu]{levine2020offline}
S.~Levine, A.~Kumar, G.~Tucker, and J.~Fu.
\newblock Offline reinforcement learning: {Tutorial}, review, and perspectives on open problems.
\newblock \emph{arXiv preprint arXiv:2005.01643}, 2020.

\bibitem[Nair et~al.(2020)Nair, Gupta, Dalal, and Levine]{nair2020awac}
A.~Nair, A.~Gupta, M.~Dalal, and S.~Levine.
\newblock {AWAC}: {Accelerating} online reinforcement learning with offline datasets.
\newblock \emph{arXiv preprint arXiv:2006.09359}, 2020.

\bibitem[Osband et~al.(2013)Osband, Russo, and Van~Roy]{osband2013more}
I.~Osband, D.~Russo, and B.~Van~Roy.
\newblock {(More)} efficient reinforcement learning via posterior sampling.
\newblock \emph{Advances in Neural Information Processing Systems}, 26, 2013.

\bibitem[Osband et~al.(2019)Osband, Van~Roy, Russo, Wen, et~al.]{osband2019deep}
I.~Osband, B.~Van~Roy, D.~J. Russo, Z.~Wen, et~al.
\newblock Deep exploration via randomized value functions.
\newblock \emph{J. Mach. Learn. Res.}, 20\penalty0 (124):\penalty0 1--62, 2019.

\bibitem[Ouyang et~al.(2017)Ouyang, Gagrani, Nayyar, and Jain]{ouyang2017learning}
Y.~Ouyang, M.~Gagrani, A.~Nayyar, and R.~Jain.
\newblock Learning unknown {Markov} decision processes: {A} {Thompson} sampling approach.
\newblock \emph{Advances in neural information processing systems}, 30, 2017.

\bibitem[Rashidinejad et~al.(2021)Rashidinejad, Zhu, Ma, Jiao, and Russell]{rashidinejad2021bridging}
P.~Rashidinejad, B.~Zhu, C.~Ma, J.~Jiao, and S.~Russell.
\newblock Bridging offline reinforcement learning and imitation learning: {A} tale of pessimism.
\newblock \emph{Advances in Neural Information Processing Systems}, 34:\penalty0 11702--11716, 2021.

\bibitem[Russo and Van~Roy(2016)]{russo2016information}
D.~Russo and B.~Van~Roy.
\newblock An information-theoretic analysis of {Thompson} sampling.
\newblock \emph{The Journal of Machine Learning Research}, 17\penalty0 (1):\penalty0 2442--2471, 2016.

\bibitem[Russo et~al.(2018)Russo, Van~Roy, Kazerouni, Osband, Wen, et~al.]{russo2018tutorial}
D.~J. Russo, B.~Van~Roy, A.~Kazerouni, I.~Osband, Z.~Wen, et~al.
\newblock A tutorial on {Thompson} sampling.
\newblock \emph{Foundations and Trends{\textregistered} in Machine Learning}, 11\penalty0 (1):\penalty0 1--96, 2018.

\bibitem[Schaal(1996)]{schaal1996learning}
S.~Schaal.
\newblock Learning from demonstration.
\newblock \emph{Advances in neural information processing systems}, 9, 1996.

\bibitem[Schrittwieser et~al.(2021)Schrittwieser, Hubert, Mandhane, Barekatain, Antonoglou, and Silver]{schrittwieser2021online}
J.~Schrittwieser, T.~Hubert, A.~Mandhane, M.~Barekatain, I.~Antonoglou, and D.~Silver.
\newblock Online and offline reinforcement learning by planning with a learned model.
\newblock \emph{Advances in Neural Information Processing Systems}, 34:\penalty0 27580--27591, 2021.

\bibitem[Song et~al.(2022)Song, Zhou, Sekhari, Bagnell, Krishnamurthy, and Sun]{song2022hybrid}
Y.~Song, Y.~Zhou, A.~Sekhari, J.~A. Bagnell, A.~Krishnamurthy, and W.~Sun.
\newblock Hybrid {RL}: {Using} both offline and online data can make {RL} efficient.
\newblock \emph{arXiv preprint arXiv:2210.06718}, 2022.

\bibitem[Thoppilan et~al.(2022)Thoppilan, De~Freitas, Hall, Shazeer, Kulshreshtha, Cheng, Jin, Bos, Baker, Du, et~al.]{thoppilan2022lamda}
R.~Thoppilan, D.~De~Freitas, J.~Hall, N.~Shazeer, A.~Kulshreshtha, H.-T. Cheng, A.~Jin, T.~Bos, L.~Baker, Y.~Du, et~al.
\newblock Lamda: {Language} models for dialog applications.
\newblock \emph{arXiv preprint arXiv:2201.08239}, 2022.

\bibitem[Uehara and Sun(2021)]{uehara2021pessimistic}
M.~Uehara and W.~Sun.
\newblock Pessimistic model-based offline reinforcement learning under partial coverage.
\newblock \emph{arXiv preprint arXiv:2107.06226}, 2021.

\bibitem[Vecerik et~al.(2017)Vecerik, Hester, Scholz, Wang, Pietquin, Piot, Heess, Roth{\"o}rl, Lampe, and Riedmiller]{vecerik2017leveraging}
M.~Vecerik, T.~Hester, J.~Scholz, F.~Wang, O.~Pietquin, B.~Piot, N.~Heess, T.~Roth{\"o}rl, T.~Lampe, and M.~Riedmiller.
\newblock Leveraging demonstrations for deep reinforcement learning on robotics problems with sparse rewards.
\newblock \emph{arXiv preprint arXiv:1707.08817}, 2017.

\bibitem[Wagenmaker and Pacchiano(2022)]{wagenmaker2022leveraging}
A.~Wagenmaker and A.~Pacchiano.
\newblock Leveraging offline data in online reinforcement learning.
\newblock \emph{arXiv preprint arXiv:2211.04974}, 2022.

\bibitem[Wan et~al.(2022)Wan, Kveton, and Song]{wan2022safe}
R.~Wan, B.~Kveton, and R.~Song.
\newblock Safe exploration for efficient policy evaluation and comparison.
\newblock \emph{arXiv preprint arXiv:2202.13234}, 2022.

\bibitem[Weissman et~al.(2003)Weissman, Ordentlich, Seroussi, Verdu, and Weinberger]{weissman2003inequalities}
T.~Weissman, E.~Ordentlich, G.~Seroussi, S.~Verdu, and M.~J. Weinberger.
\newblock Inequalities for the {L1} deviation of the empirical distribution.
\newblock \emph{Hewlett-Packard Labs, Tech. Rep}, 2003.

\bibitem[Xie et~al.(2021{\natexlab{a}})Xie, Cheng, Jiang, Mineiro, and Agarwal]{xie2021bellman}
T.~Xie, C.-A. Cheng, N.~Jiang, P.~Mineiro, and A.~Agarwal.
\newblock Bellman-consistent pessimism for offline reinforcement learning.
\newblock \emph{Advances in neural information processing systems}, 34:\penalty0 6683--6694, 2021{\natexlab{a}}.

\bibitem[Xie et~al.(2021{\natexlab{b}})Xie, Jiang, Wang, Xiong, and Bai]{xie2021policy}
T.~Xie, N.~Jiang, H.~Wang, C.~Xiong, and Y.~Bai.
\newblock Policy finetuning: {Bridging} sample-efficient offline and online reinforcement learning.
\newblock \emph{Advances in neural information processing systems}, 34:\penalty0 27395--27407, 2021{\natexlab{b}}.

\bibitem[Zheng et~al.(2023)Zheng, Luo, Wei, Song, Li, and Jiang]{zheng2023adaptive}
H.~Zheng, X.~Luo, P.~Wei, X.~Song, D.~Li, and J.~Jiang.
\newblock Adaptive policy learning for offline-to-online reinforcement learning.
\newblock \emph{arXiv preprint arXiv:2303.07693}, 2023.

\end{thebibliography}

\clearpage
\appendix

\section{Proof of Theorem \ref{thm:regbound}}
Recall that $t_k = \sum_{l=1}^{k-1} T_k$ is the starting time of learning episode $k$.
Let $\Tilde{\mathcal{D}}_{k} := \mathcal{D}_0 \cup \{(s_0, a_0, \cdots, s_{t_k-1}, a_{t_k-1}, s_{t_k})\}$ denote the offline and online data the learning agent used to form a posterior to sample the environment for the $k$-th learning episode. For simplicity of notation, let $\Pr_k(\cdot) = \Pr(\cdot|\Tilde{\mathcal{D}}_{k})$ and $\E_k[\cdot] = \E[\cdot|\Tilde{\mathcal{D}}_{k}]$. Also define $\Tilde{\E}_k^*[\cdot] = \E[\cdot|\Tilde{\mathcal{D}}_{k}, \theta^*, \Tilde{\theta}^k]$. 

For convenience, we fix $T$ and write $K = K_T$. For the sake of this proof, if the last episode goes beyond time $T-1$, we ``cut off'' the episode after time $T-1$ and define $T_K = T - 1 - t_K$ and $t_{K+1} = T$.

Without loss of generality, by choosing an appropriate reference state $\hat{s}$ in \eqref{eq:def:v} we can assume that $v(s;\theta)\in [0, \vbar]$.

We will start with the following lemma which bounds all the mismatch probabilities by the initial mismatch probability.

\begin{lemma}\label{lem:psmonotone}
    $\Pr(\Tilde{\pi}^k \neq \pi^*) \leq \Pr(\Tilde{\pi}^1 \neq \pi^* )$ for all $k\geq 1$.
\end{lemma}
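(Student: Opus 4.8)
The plan is to track the entire \emph{posterior over optimal policies} and to observe that the per-episode probability of sampling the correct policy equals the squared $\ell_2$-mass of that posterior, which is a submartingale in $k$. Since $\cS$ and $\cA$ are finite there are only finitely many deterministic stationary Markov policies; for each such $\pi$ I define
\[
p_k(\pi) := \Pr\!\left(\pi^* = \pi \mid \Tilde{\mathcal{D}}_k\right) = \mu_k\!\left(\{\theta : \texttt{MDPSolve}(\theta) = \pi\}\right),
\]
where the second equality holds because $\pi^* = \texttt{MDPSolve}(\theta^*)$ and $\mu_k$ is the posterior law of $\theta^*$ given $\Tilde{\mathcal{D}}_k$. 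By the posterior-sampling property, conditioned on $\Tilde{\mathcal{D}}_k$ the sample $\Tilde{\theta}^k$ is drawn from $\mu_k$ using fresh randomness independent of $\theta^*$, so $\Tilde{\theta}^k$ and $\theta^*$ are conditionally i.i.d.\ given $\Tilde{\mathcal{D}}_k$. Applying the deterministic map $\texttt{MDPSolve}$ to both, the pair $(\Tilde{\pi}^k, \pi^*)$ is conditionally i.i.d.\ with law $p_k(\cdot)$, whence
\[
\Pr_k\!\left(\Tilde{\pi}^k = \pi^*\right) = \sum_{\pi} p_k(\pi)^2 = \|p_k\|_2^2 .
\]

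Next I would show that $\E[\|p_k\|_2^2]$ is nondecreasing in $k$. Because $\Tilde{\mathcal{D}}_k \subseteq \Tilde{\mathcal{D}}_{k+1}$, the data $\sigma$-algebras are nested, so for each fixed $\pi$ the process $p_k(\pi) = \E[\mathbf{1}\{\pi^* = \pi\} \mid \Tilde{\mathcal{D}}_k]$ is a Doob martingale; in particular $\E_k[p_{k+1}(\pi)] = p_k(\pi)$ by the tower property. Applying conditional Jensen to the convex function $x \mapsto x^2$ gives $\E_k[p_{k+1}(\pi)^2] \geq p_k(\pi)^2$, and summing over the finitely many policies $\pi$ and then taking expectations yields
\[
\E\!\left[\|p_{k+1}\|_2^2\right] \geq \E\!\left[\|p_k\|_2^2\right].
\]

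Finally I would combine the two steps. Taking expectations in the first display gives $\Pr(\Tilde{\pi}^k = \pi^*) = \E[\|p_k\|_2^2]$, which is nondecreasing in $k$, so $\Pr(\Tilde{\pi}^k \neq \pi^*) = 1 - \E[\|p_k\|_2^2]$ is nonincreasing in $k$; evaluating at $k=1$ delivers the claim (indeed full monotonicity, which is stronger). The step requiring the most care is the conditioning in the first paragraph: I must invoke the posterior-sampling lemma precisely enough that $(\Tilde{\pi}^k, \pi^*)$ are genuinely conditionally i.i.d.\ given $\Tilde{\mathcal{D}}_k$, i.e.\ that the sampling noise is independent of $\theta^*$ once we condition on the data. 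By contrast, the martingale step is robust to the adaptive, policy-dependent nature of the online data collection, since the Doob martingale property relies only on the nesting of the data $\sigma$-algebras and not on how the trajectories were generated.
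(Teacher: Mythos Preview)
Your proof is correct and is essentially the same argument as the paper's. The only cosmetic difference is that the paper phrases it via the concave function $f(x)=x(1-x)$ applied to $p_k(\pi)$ (giving $\Pr_k(\Tilde\pi^k\neq\pi^*)=\sum_\pi f(p_k(\pi))$) and applies Jensen directly between $\Tilde{\mathcal D}_1$ and $\Tilde{\mathcal D}_k$, whereas you work with the convex function $x\mapsto x^2$ on the complementary quantity $\|p_k\|_2^2$ and iterate $k\to k+1$; since $\sum_\pi p_k(\pi)=1$ these are exactly the same computation.
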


\begin{proof}
    Define $f(x) = x(1-x)$. We have
    \begin{align}
        \Pr(\Tilde{\pi}^k \neq \pi^* ) &= \E[\Pr_k(\Tilde{\pi}^k \neq \pi^*) ]= \E\left[ \sum_{\pi\in\varPi} \Pr_k(\Tilde{\pi}^k = \pi, \pi^* \neq \pi)  \right]\\
        &= \E\left[ \sum_{\pi\in\varPi} f(\Pr_k(\pi^* = \pi))  \right] \\
        &=  \E\left[\sum_{\pi\in\varPi} \E_1[f(\Pr_k(\pi^* = \pi))]\right]\leq \E\left[ \sum_{\pi\in\varPi} f(\E_1[\Pr_k(\pi^* = \pi)])\right]\\
        &= \E\left[\sum_{\pi\in\varPi} f(\Pr_1(\pi^* = \pi))\right]= \E\left[ \sum_{\pi\in\varPi} \Pr_1(\Tilde{\pi}^1 = \pi, \pi^* \neq \pi)  \right]\\
        &= \Pr(\Tilde{\pi}^1 \neq \pi^*)
    \end{align}
    where the inequality is due to Jensen's Inequality and the fact that $f$ is a concave function. 
\end{proof}

Define the episode $k$ regret $Z_k$ as
\begin{equation}
    Z_k = T_k \bar{r}^*(\theta^*) - \sum_{t=t_k}^{t_{k+1}-1} r(s_t, a_t).
\end{equation}
Then we can write $\BR_T(\phi^{\mathrm{iPSRL}}) = \sum_{k=1}^K\E[Z_k]$.

If the event $\{\Tilde{\pi}^k = \pi^*\}$ is true, then for $t_k\leq t < t_{k+1}$, we have $a_t = \pi^*(s_t)$ and $v(s_t; \theta^*) + \bar{r}^*(\theta^*) = r(s_t, a_t) + \sum_{s'\in\cS}\theta^*(s'|s_t, a_t) v(s';\theta^*)$ due to the assumption on the planning oracle $\texttt{MDPSolve}$. Therefore,
\begin{equation}
    Z_k \bm{1}_{\{ \Tilde{\pi}^k = \pi^*\}} = \sum_{t=t_k}^{t_{k+1}-1} \left(\sum_{s'\in\cS}\theta^*(s'|s_t, a_t) v(s';\theta^*) - v(s_t;\theta^*)\right)\bm{1}_{\{ \Tilde{\pi}^k = \pi^*\}}
\end{equation}

Furthermore, using the fact that $\Tilde{\E}_k^*[v(s_{t+1}; \theta^*)|s_t, a_t] = \sum_{s'\in\cS}\theta^*(s'|s_t, a_t) v(s';\theta^*)$, we have
\begin{align}
    &\quad~\Tilde{\E}_k^*[Z_k \bm{1}_{\{\Tilde{\pi}^k = \pi^*\}}] \\
    &= \Tilde{\E}_k^*\left[\sum_{t=t_k}^{t_{k+1}-1}\left( \sum_{s'\in\cS}\theta^*(s'|s_t, a_t) v(s';\theta^*) - v(s_t;\theta^*) \right)\bm{1}_{\{\Tilde{\pi}^k = \pi^*\}}\right]\\
    &= \Tilde{\E}_k^*\left[\sum_{t=t_k}^{t_{k+1}-1}\left(\sum_{s'\in\cS}\theta^*(s'|s_t, a_t) v(s';\theta^*) - v(s_t;\theta^*)\right) \right]\bm{1}_{\{\Tilde{\pi}^k = \pi^*\}} \\
    &= \Tilde{\E}_k^*\left[\sum_{t=t_k}^{t_{k+1}-1}\left( v(s_{t+1};\theta^*) - v(s_t;\theta^*)\right) \right]\bm{1}_{\{\Tilde{\pi}^k = \pi^*\}} \\
    &= \Tilde{\E}_k^*\left[v(s_{t_{k+1}};\theta^*) - v(s_{t_k};\theta^*)\right]\bm{1}_{\{\Tilde{\pi}^k = \pi^*\}} \\
    &= \Tilde{\E}_k^*\left[\left[v(s_{t_{k+1}};\theta^*) - v(s_{t_k};\theta^*)\right]\bm{1}_{\{\Tilde{\pi}^k = \pi^*\}}\right]\label{eq:zkeq}
\end{align}

Taking the sum over all episodes $k$, we have
\begin{align}
    &\quad~\sum_{k=1}^K \E\left[Z_k \bm{1}_{\{\Tilde{\pi}^k = \pi^*\}} \right]\\
    &=\E\left[\sum_{k=1}^K[v(s_{t_{k+1}};\theta^*) - v(s_{t_k};\theta^*)]\bm{1}_{\{\Tilde{\pi}^k = \pi^*\}} \right]\\
    &=\E\left[v(s_{t_{K+1}};\theta^*) \bm{1}_{\{\Tilde{\pi}^K = \pi^*\}} - v(s_{t_{1}};\theta^*) \bm{1}_{\{\Tilde{\pi}^1 = \pi^*\}}\right]\\
    &\quad+ \E\left[\sum_{k=1}^{K-1}v(s_{t_{k+1}};\theta^*)\bm{1}_{\{\Tilde{\pi}^k = \pi^*, \Tilde{\pi}^{k+1} \neq \pi^*\}} - \sum_{k=2}^{K}v(s_{t_{k}};\theta^*)\bm{1}_{\{\Tilde{\pi}^{k-1} \neq \pi^*, \Tilde{\pi}^{k} = \pi^*\}} \right]\\
    &\leq \vbar + \vbar \E\left[\sum_{k=1}^{K-1} \bm{1}_{\{\Tilde{\pi}^k = \pi^*, \Tilde{\pi}^{k+1} \neq \pi^*\}} \right] \leq \vbar + \vbar\sum_{k=1}^{K-1}\Pr(\Tilde{\pi}^{k+1} \neq \pi^*) \\
    &\leq \vbar + \vbar(K-1)\varepsilon \label{eq:sumzkeq}
\end{align}
where we used Lemma \ref{lem:psmonotone} in the last inequality.

In general, for $t_k\leq t < t_{k+1}$, since $a_t = \Tilde{\pi}^k(s_t)$, we have $v(s_t; \Tilde{\theta}^k) + \bar{r}^*(\Tilde{\theta}^k) = r(s_t, a_t) + \sum_{s'\in\cS}\Tilde{\theta}^k(s'|s_t, a_t) v(s';\Tilde{\theta}^k)$ due to the assumption on the planning oracle $\texttt{MDPSolve}$. Therefore,
\begin{align}
    Z_k &= T_k \bar{r}^*(\theta^*) -  \sum_{t=t_k}^{t_{k+1}-1} \left(\bar{r}^*(\Tilde{\theta}^k) + v(s_t; \Tilde{\theta}^k) - \sum_{s'\in\cS}\Tilde{\theta}^k(s'|s_t, a_t) v(s';\Tilde{\theta}^k) \right)\\
    &= \underbrace{T_k\left[\bar{r}^*(\theta^*) - \bar{r}^*(\Tilde{\theta}^k)\right]}_{Z_{k, 0}} + \underbrace{\sum_{t=t_k}^{t_{k+1}-1}\left(\sum_{s'\in\cS} \theta^*(s'|s_t, a_t) v(s';\Tilde{\theta}^k) - v(s_t;\Tilde{\theta}^k)\right)}_{Z_{k, 1}} \\
    &\quad + \underbrace{\sum_{t=t_k}^{t_{k+1}-1}\sum_{s'\in\cS}[\Tilde{\theta}^k(s'|s_t, a_t) - \theta^*(s'|s_t, a_t)] v(s';\Tilde{\theta}^k)}_{Z_{k,2}}.\label{eq:zkineqdecomp}
\end{align}

We will analyze the three terms in \eqref{eq:zkineqdecomp} separately. First, we have
\begin{equation}\label{eq:zk0}
    \E_k[Z_{k, 0} \bm{1}_{\{\Tilde{\pi}^k \neq \pi^*\}} ] = T_k\left(\E_k[\bar{r}^*(\theta^*)\bm{1}_{\{\Tilde{\pi}^k \neq \pi^*\}}] - \E_k[\bar{r}^*(\Tilde{\theta}^k)\bm{1}_{\{ \pi^* \neq \Tilde{\pi}^k \}}] \right) = 0
\end{equation}
since $\theta^*$ and $\Tilde{\theta}^k$ are i.i.d. given $\Tilde{\mathcal{D}}_{k}$. Following similar steps as \eqref{eq:zkeq} we have
\begin{equation}
    \Tilde{\E}_k^*[Z_{k, 1}\bm{1}_{\{\Tilde{\pi}^k \neq \pi^*\}}] = \Tilde{\E}_k^*\left[v(s_{t_{k+1}};\Tilde{\theta}^k) - v(s_{t_k};\Tilde{\theta}^k)\right]\bm{1}_{\{\Tilde{\pi}^k \neq \pi^*\}}\leq \vbar \bm{1}_{\{\Tilde{\pi}^k \neq \pi^*\}}
\end{equation}
and hence  
\begin{equation}\label{eq:sumzk1}
\E[Z_{k, 1}\bm{1}_{\{\Tilde{\pi}^k \neq \pi^*\}}]\leq \vbar \Pr(\Tilde{\pi}^k \neq \pi^*) \leq \vbar\varepsilon.
\end{equation}
where we used Lemma \ref{lem:psmonotone}.
Combining \eqref{eq:sumzkeq}\eqref{eq:zkineqdecomp}\eqref{eq:zk0}\eqref{eq:sumzk1}, we have
\begin{align}
    \sum_{k=1}^K\E[Z_k] &= \sum_{k=1}^K\E[Z_k \bm{1}_{\{\Tilde{\pi}^k = \pi^*\}}] + \sum_{k=1}^K\E[(Z_{k, 0} + Z_{k, 1} + Z_{k, 2}) \bm{1}_{\{\Tilde{\pi}^k \neq \pi^*\}} ]\\
    &\leq \vbar + \vbar(K-1)\varepsilon + \sum_{k=1}^K\E[Z_{k,1} \bm{1}_{\{\Tilde{\pi}^k \neq \pi^*\}}] + \sum_{k=1}^K\E[Z_{k, 2}\bm{1}_{\{\Tilde{\pi}^k \neq \pi^*\}}]\\
    &\leq \vbar + \vbar(2K-1)\varepsilon + \sum_{k=1}^K\E[Z_{k, 2}\bm{1}_{\{\Tilde{\pi}^k \neq \pi^*\}} ]\label{eq:sumzkbound}
\end{align}

Now we bound the summation term in \eqref{eq:sumzkbound}. For convenience, define $I_k = \bm{1}_{\{\Tilde{\pi}^k \neq \pi^*\}}$. First, since $v(s;\theta)\in [0, \vbar]$, we have
\begin{align}
    Z_{k, 2}&\leq \vbar \sum_{t=t_k}^{t_{k+1}-1}\sum_{s'\in\cS}(\Tilde{\theta}^k(s'|s_t, a_t) - \theta^*(s'|s_t, a_t))_+\\
    &=\vbar  \sum_{t=t_k}^{t_{k+1}-1} \frac{1}{2}\|\Tilde{\theta}^k(\cdot|s_t, a_t) - \theta^*(\cdot|s_t, a_t) \|_1\label{eq:zk2wp1}
\end{align}

Following \cite{osband2013more,ouyang2017learning}, define $N_t(s, a):=\sum_{\tau=0}^{t-1}\bm{1}_{\{(s_{\tau}, a_{\tau}) = (s, a) \}}$ to be the number of occurrences of the state-action pair $(s, a)$ from time $0$ to time $t-1$.

\textbf{Claim}: For any $k,t$ such that $t_k\leq t < t_{k+1}$ and any $\delta \in (0, 1)$,
\begin{align}
    &\quad~\E\left[\frac{1}{2}\|\Tilde{\theta}^k(\cdot|s_t, a_t) - \theta^*(\cdot|s_t, a_t) \|_1\cdot I_k \right] \\
    &\leq  \E\left[\min\left\{ \sqrt{\dfrac{(2\log 2)S + 2\log(1/\delta)}{N_{t_k}(s_t, a_t) }}, 2\right\}\cdot I_k\right] + 2t_k SA \delta \label{eq:zk2concentration}
\end{align}

For $t_k\leq t < t_{k+1}$, define the event
\begin{equation}
    \mathcal{E}_t = \{N_{t}(s_t, a_t) > 2N_{t_k}(s_t, a_t) \}
\end{equation}

We can decompose the first term in \eqref{eq:zk2concentration} by
\begin{align}
    &\quad~\min\left\{ \sqrt{\dfrac{(2\log 2)S + 2\log(1/\delta)}{N_{t_k}(s_t, a_t) }}, 2\right\} I_k\\
    &\leq \min\left\{\sqrt{\dfrac{(2\log 2)S + 2\log(1/\delta)}{N_{t_k}(s_t, a_t) }}, 2\right\} I_k\bm{1}_{\mathcal{E}_t^c} + 2I_k\bm{1}_{\mathcal{E}_t}\\
    &\leq \min\left\{\sqrt{\dfrac{(4\log 2)S + 4\log(1/\delta)}{N_{t}(s_t, a_t) }}, 2\right\} I_k\bm{1}_{\mathcal{E}_t^c} + 2I_k\bm{1}_{\mathcal{E}_t}\\
    &\leq \underbrace{\sqrt{\dfrac{(4\log 2)S + 4\log(1/\delta)}{\max\{N_{t}(s_t, a_t), 1\} }}}_{\beta_t} I_k + 2I_k\bm{1}_{\mathcal{E}_t}\label{eq:zk2decompose}
\end{align}

Combining \eqref{eq:zk2wp1}\eqref{eq:zk2concentration}\eqref{eq:zk2decompose} and set $\delta = (SAKT)^{-1}$, we have
\begin{align}
    &\quad~\sum_{k=1}^K\E[Z_{k, 2}I_k ] \\
    &\leq \vbar\E\left[\sum_{k=1}^K \sum_{t=t_k}^{t_{k+1}-1} \beta_t I_k \right] + 2\vbar \E\left[\sum_{k=1}^K \sum_{t=t_k}^{t_{k+1}-1} I_k \bm{1}_{\mathcal{E}_t} \right] + 2\vbar SA\delta \sum_{k=1}^K t_k \\
    &\leq \vbar\E\left[\sum_{k=1}^K \sum_{t=t_k}^{t_{k+1}-1} \beta_t I_k \right] + 2\vbar \E\left[\sum_{k=1}^K \sum_{t=t_k}^{t_{k+1}-1} I_k \bm{1}_{\mathcal{E}_t} \right] + 2\vbar
    \label{eq:zk2threeterm}
\end{align}

By Cauchy-Schwatz inequality, we have
\begin{align}
    &\quad~\E\left[\sum_{k=1}^K \sum_{t=t_k}^{t_{k+1}-1} \beta_t I_k \right]\\
    &\leq \sqrt{\E\left[\sum_{k=1}^K \sum_{t=t_k}^{t_{k+1}-1} I_k^2 \right]} \sqrt{\E\left[ \sum_{k=1}^K\sum_{t=t_k}^{t_{k+1}-1} \beta_t^2\right] } \\
    &=\sqrt{\sum_{k=1}^K \sum_{t=t_k}^{t_{k+1}-1} \E[I_k] } \sqrt{\E\left[\sum_{t=0}^{T-1} \beta_t^2\right] } \stackrel{(a)}{\leq} \sqrt{\varepsilon T} \sqrt{ \E\left[\sum_{t=0}^{T-1} \beta_t^2\right] } \\
    &= \sqrt{\varepsilon T [(4\log 2)S + 4\log(SAKT)]\E\left[\sum_{t=0}^{T-1}\left(\max\{N_t(s_t, a_t), 1\}\right)^{-1} \right] }
    \label{eq:zk2term1cs}
\end{align}
where in $(a)$ we have used Lemma \ref{lem:psmonotone}.

Furthermore, we have
\begin{align}
    &\quad~\sum_{t=0}^{T-1}\left(\max\{N_t(s_t, a_t), 1\}\right)^{-1}= \sum_{(s, a)\in\cS\times\cA} \sum_{j=0}^{N_{T}(s, a)-1} \left(\max\{j, 1\}\right)^{-1}\\
    &\leq  \sum_{(s, a)\in\cS\times\cA} \left[1+\log(N_{T}(s, a) +1)\right]\leq SA\left[1+\log\left(\frac{T}{SA}+1\right)\right] \label{eq:zk2term1hsum}
\end{align}
where the last inequality is due to the concavity of the function $f(x) = \log(x+1)$.

Fix a state-action pair $(s, a)\in\cS\times\cA$, let $k_1 < k_2 < ... < k_m$ be the first $m$ episodes such that the event $\{N_{t}(s, a) > 2N_{t_k}(s, a)\}$ occurs.
Let $\Bar{\tau}_l := t_{k_l + 1} - 1$ denote the last timestamp in episode $k_l$. Using induction, it can be shown that $N_{\Bar{\tau}_l}(s, a) \geq 2^l - 1$ for $1\leq l\leq m$. Therefore, the above event can occur in no more than $\log_2(N_{T}(s, a)+1)$ episodes. Therefore, we have
\begin{align}
    \sum_{k=1}^K \sum_{t=t_k}^{t_{k+1}-1} \bm{1}_{\mathcal{E}_t} &\leq \sum_{(s, a)\in\cS\times\cA} \left(\max_{k\in[K]} T_k\right) \log_2(N_{T}(s, a)+1)  \\
    &\leq SA\left(\max_{k\in[K]} T_k\right) \log_2\left(\frac{T}{SA}+1\right).
\end{align}
where the last inequality is due to the concavity of the function $f(x) = \log(x+1)$.

Also note that one can obtain a prior-dependent bound on the second term in \eqref{eq:zk2threeterm} can be bounded with Lemma \ref{lem:psmonotone}:
\begin{align}
    \E\left[\sum_{k=1}^K \sum_{t=t_k}^{t_{k+1}-1} I_k \bm{1}_{\mathcal{E}_t} \right]\leq \E\left[\sum_{k=1}^K \sum_{t=t_k}^{t_{k+1}-1} I_k \right] =\sum_{k=1}^K \sum_{t=t_k}^{t_{k+1}-1}  \E\left[I_k \right] \leq \varepsilon T
\end{align}

Therefore, we conclude that
\begin{align}
    \E\left[\sum_{k=1}^K \sum_{t=t_k}^{t_{k+1}-1} I_k \bm{1}_{\mathcal{E}_t} \right]\leq \min\left\{\varepsilon T, SA\left(\max_{k\in[K]} T_k\right)\log_2\left(\frac{T}{SA}+1\right) \right\}\label{eq:zk2term2}
\end{align}

Finally, combining \eqref{eq:sumzkbound}\eqref{eq:zk2threeterm}\eqref{eq:zk2term1cs}\eqref{eq:zk2term1hsum}\eqref{eq:zk2term2}, we have
\begin{align}
    &\quad~\sum_{k=1}^K\E[Z_{k} ] \\
    &\leq \vbar + \vbar\varepsilon(2K-1) + \vbar\sqrt{\varepsilon T [(4\log 2)S + 4\log(SAKT)]SA\left[1+\log\left(\frac{T}{SA}+1\right)\right]} + \\
    &+ 2\vbar \min\left\{\varepsilon T, SA\left(\max_{k\in[K]} T_k\right) \log_2\left(\frac{T}{SA}+1\right) \right\} + 2\vbar \\
    &\leq 3\vbar + 2\vbar (R_1 + R_2 + R_3)
\end{align}
where
\begin{align}
    R_1 &:= \varepsilon K \\
    R_2 &:= \sqrt{\varepsilon S^2 A T \log(2SAKT)\left[1+\log\left(\frac{T}{SA}+1\right)\right] } \\
    R_3 &:= \min\left\{\varepsilon T, SA\left(\max_{k\in[K]} T_k\right)\log_2\left(\frac{T}{SA}+1\right) \right\}
\end{align}

%\textcolor{blue}{Set $T_k \sim \varepsilon k$, then $K = O(\sqrt{T/\varepsilon})$ and $\max_{k\in[K]}T_k = O(\sqrt{\varepsilon T})$. In this case, $R_1 = O(\sqrt{\varepsilon T}), R_2 = \Tilde{O}(\sqrt{\varepsilon S^2 A T}), R_3 = \Tilde{O}(SA \sqrt{\varepsilon T})$. As a result, the regret is bounded by $3\vbar + 2\vbar\cdot \Tilde{O}(SA \sqrt{\varepsilon T}) $.}

\begin{proof}[Proof of Claim]
    Let $\check{\theta}^{n}(\cdot|s, a)$ be the empirical distribution of the next states of the state-action pair $(s, a)$ after observing $(s, a)$ exactly $n$ times. By the $L_1$ concentration inequality for empirical distributions \citep{weissman2003inequalities}, we have
    \begin{align}
        \Pr_{\theta^*}(\|\check{\theta}^n(\cdot|s, a) - \theta^*(\cdot|s,a) \|_1 > \lambda) \leq 2^{S}\exp\left(-\frac{1}{2}n \lambda^2\right)
    \end{align}
    for any $\lambda > 0$. For $\delta > 0$, define $\xi(n, \delta) := \min\left\{\sqrt{\frac{(2\log 2)S + 2\log(1/\delta)}{n}}, 2\right\}$, we have
    \begin{align}
        \Pr_{\theta^*}\left(\|\check{\theta}^n(\cdot|s, a) - \theta^*(\cdot|s,a) \|_1 > \xi(n, \delta) \right) \leq \delta.
    \end{align}

    Define 
    \begin{align}
        \hat{\theta}^{t}(\cdot|s, a):= \begin{cases}
        \frac{1}{t}\sum_{\tau=0}^{t-1} \bm{1}_{\{(s_{\tau}, a_{\tau}) = (s, a) , s_{\tau+1} = \cdot \} } &t>0 \\
        \text{uniform}&t=0
        \end{cases}
    \end{align}
    to be the empirical distribution of next states of $(s, a)$ in the first $t$ steps. Define
    \begin{align*}
        \hat{\varTheta}^k &= \left\{\theta: \|\hat{\theta}^{t_k}(\cdot|s, a) - \theta(\cdot|s,a) \|_1\leq \xi(N_{t_k}(s, a), \delta)~~\forall s\in\mathcal{S}, a\in\mathcal{A} \right\}
    \end{align*}

    Then, we have $\Pr(\theta^*\not\in \hat{\varTheta}^k) = 0$ if $t_k = 0$ (since $\xi(0,\delta) = 2$) and
    \begin{align}
        &\quad~\Pr(\theta^*\not\in \hat{\varTheta}^k) \\
        &\leq \Pr\left(\exists n \in [t_k], s\in\mathcal{S}, a\in\mathcal{A}~~ \|\check{\theta}^n(\cdot|s, a) - \theta^*(\cdot|s,a) \|_1 > \xi(n, \delta) \right)\\
        &\leq t_k SA \delta 
    \end{align}
    if $t_k > 0$.

    Since $\theta^*$ and $\Tilde{\theta}^k$ are i.i.d. given $\Tilde{\mathcal{D}}_k$, and the random set $\hat{\varTheta}^k$ is measurable with respect to $\Tilde{\mathcal{D}}_k$, we have $\Pr(\theta^*\not\in \hat{\varTheta}^k) = \Pr(\Tilde{\theta}^k\not\in \hat{\varTheta}^k)$. Therefore we have
    \begin{align}
        &\quad~\E\left[\|\Tilde{\theta}^k(\cdot|s_t, a_t) - \theta^*(\cdot|s_t, a_t) \|_1\cdot I_k  \right]\\
        &\leq \E\left[\|\Tilde{\theta}^k(\cdot|s_t, a_t) - \theta^*(\cdot|s_t, a_t) \|_1\cdot I_k \bm{1}_{\{\theta^*, \Tilde{\theta}^k \in \hat{\varTheta}^k \}}\right] + \\
        &+ \E\left[\|\Tilde{\theta}^k(\cdot|s_t, a_t) - \theta^*(\cdot|s_t, a_t) \|_1 \cdot I_k \left(\bm{1}_{\{ \theta^* \not\in \hat{\varTheta}^k\}} + \bm{1}_{\{ \Tilde{\theta}^k \not\in \hat{\varTheta}^k\}}\right) \right] \\
        &\leq \E[2\xi(N_{t_k}(s_t, a_t), \delta)\cdot I_k] + 2\left(\Pr(\theta^* \not\in \hat{\varTheta}^k) + \Pr(\Tilde{\theta}^k \not\in \hat{\varTheta}^k) \right)\\
        &\leq 2\E[\xi(N_{t_k}(s_t, a_t), \delta)\cdot I_k] + 4t_k SA \delta 
    \end{align}

\end{proof}

\section{Proof of Lemma \ref{lem:piestimator}}
In this proof, let $\Pr_{\theta^*}$ be the distribution induced by the expert's policy $\pi^\beta$ conditioning on $\theta^*$.

Under Assumption \ref{assump:span}, we have $q(s, a';\theta) - q(s, a;\theta) \leq r(s, a') - r(s, a) + \vbar \leq 1 + \vbar$ for any $s\in\cS, a, a'\in\cA$. Therefore, for any $s\in\cS, a\in\cA$, we have
\begin{align}
    \pi^\beta(a|s) &:= \dfrac{\exp(\beta q(s, a; \theta^*))}{\sum_{a'\in\cA}\exp(\beta q(s, a'; \theta^*)) }\\
    &= \dfrac{1}{\sum_{a'\in\cA}\exp[\beta (q(s, a'; \theta^*) - q(s, a;\theta^*) )]} \\
    &\geq \dfrac{1}{1 + (A-1)\exp(\beta(1+\vbar))}=:\pi_{\min}\label{eq:minactprob}
\end{align}

Combining \eqref{eq:minactprob} with Assumption \ref{assump:minprob}, for any $s, s'\in \mathcal{S}$,
\begin{equation}
    \Pr_{\theta^*}(\exists 0 < k \leq S, s_k = s'|s_0 = s)\geq \delta (\pi_{\min})^{S} =:\Tilde{\delta}
\end{equation}

Let $X_{s,s'}$ be a random variable that represents the hitting time of $s'$ starting from $s$, i.e. it follows the conditional distribution of $\min\{t>0: s_t=s'\}$ given $s_0=s$. Following the same derivation of Equation (1.18) of \cite{levin2017markov}, we have
    \begin{equation}
        \Pr_{\theta^*}(X_{s,s'} > kS) \leq (1-\Tilde{\delta})^k\quad\forall k\in\mathbb{N}
    \end{equation}

    Therefore, for any $u > 0$,
    \begin{align}
        \Pr(X_{s,s'} > u) \leq (1-\Tilde{\delta})^{\lfloor u/S\rfloor} \leq (1-\Tilde{\delta})^{\frac{u}{S}-1}
    \end{align}

    Set $\lambda = -\frac{1}{2S}\log(1-\Tilde{\delta}) > 0$, we have
    \begin{align}
        \E[\exp(\lambda X_{s,s'})] &= \int_0^\infty \Pr(\exp(\lambda X_{s,s'}) > x )\mathrm{d} x\\
        &=1 + \int_1^\infty \Pr(\exp(\lambda X_{s,s'}) > x )\mathrm{d} x\\
        &=1 + \lambda\int_0^\infty \Pr(\exp(\lambda X_{s,s'}) > \exp(\lambda u))  \exp(\lambda u) \mathrm{d} u\\
        &\leq 1 + \lambda\int_0^\infty (1-\Tilde{\delta})^{\frac{u}{S}-1}\exp(\lambda u) \mathrm{d} u\\
        &= 1 + \lambda\int_0^\infty (1-\Tilde{\delta})^{\frac{u}{2S}-1}\mathrm{d} u =:\exp(\Tilde{\lambda}).
    \end{align}

    Next, we establish that with high probability, each state must be visited at least $c_1 N$ times in an $N$-step trajectory for some constant $c_1 > 0$. Let $\Bar{N}(s):= \sum_{t=0}^{N-1} \bm{1}_{\{\Bar{s}_t = s\}} $ be the number of visits to state $s$ in the offline data (excluding the final state $\bar{s}_N$). Let $X_{\bar{s}_1, s; 0}, X_{s,s; 1}, X_{s,s; 2}, X_{s,s; 3},\cdots$ be mutually independent random variables representing hitting times. For any $K \in\mathbb{N}$ we have
    \begin{align}
        \Pr_{\theta^*}(\Bar{N}(s) < K) &\leq \Pr_{\theta^*}\left(X_{\bar{s}_1, s; 0} + \sum_{k=1}^{K-1} X_{s,s; k} \geq N \right)\\
        &= \Pr_{\theta^*}\left(\exp\left(\lambda X_{\bar{s}_1, s; 0} + \lambda\sum_{k=1}^{K-1} X_{s,s; k}\right) \geq \exp(\lambda N) \right)\\
        &\leq \exp(-\lambda N)\E\left[\exp\left(\lambda X_{\bar{s}_1, s; 0} + \lambda\sum_{k=1}^{K-1} X_{s,s; k}\right)\right]\\
        &\leq \exp(-\lambda N)[\exp(\Tilde{\lambda})]^K = \exp(-(\lambda N - \Tilde{\lambda} K)).
    \end{align}

    Set $K = \lfloor \frac{\lambda}{2\Tilde{\lambda}} N \rfloor$, we have $\Pr_{\theta^*}(\Bar{N}(s) < K) \leq \exp(-\lambda N / 2)$. 

    Fix $s\in\cS$ and let $a_s^*:=\pi^{*}(s)$. Under Assumption \ref{assump:gap} we have
    \begin{align}
        \pi^\beta(a_s^*|s) &= \dfrac{\exp(\beta q(s, a_s^*; \theta^*))}{\sum_{a\in\cA}\exp(\beta q(s, a; \theta^*))} \\
        &= \dfrac{1}{\sum_{a\in\cA}\exp[\beta (q(s, a; \theta^*) - q(s, a_s^*; \theta^*) ) ]}\\
        &\geq \dfrac{1}{1 + (A-1)\exp(-\beta\Delta)}.
    \end{align}

    For the rest of the proof, we assume that $\beta \geq \log(2(A-1))/\Delta$. Then $\pi^\beta(a_s^*|s) \geq \frac{2}{3}$.

    For $L\in\mathbb{N}$, define $\mathcal{F}_L(s)$ to be the event that in the first $L$ visits to the state $s$, the action $\pi^*(s)$ is chosen $\leq L/2$ times. We have
    \begin{equation}\label{eq:FL}
        \Pr_{\theta^*}(\mathcal{F}_L) \leq \Pr(\mathcal{B}(L, 2/3) \leq L/2) \leq \exp\left(-\frac{L}{18}\right)
    \end{equation}
    where $\mathcal{B}(L, p)$ is a binomial random variable with parameters $L$ and $p$. The last inequality in \eqref{eq:FL} can be established with Hoeffding's inequality.

    Now, consider a construction of $\hat{\pi}^*$ as follows: For each $s\in\cS$ and $a\in\cA$, let $\Bar{N}(s, a):= \sum_{t=1}^N \bm{1}_{\{\Bar{s}_t = s, \Bar{a}_t = a \}} $. Define $\hat{\pi}^*(s)=\argmax_{a\in\cA}\Bar{N}(s, a)$ (ties are broken following a fixed ordering on $\cA$). Now we have
    \begin{align}
        \Pr_{\theta^*}(\hat{\pi}^*(s)\neq \pi^*(s)) &\leq \Pr(\Bar{N}(s, a_s^*) \leq \Bar{N}(s)/2) \\
        &\leq \Pr_{\theta^*}(\Bar{N}(s) < K) + \Pr_{\theta^*}(\Bar{N}(s) \geq K, \Bar{N}(s, a_s^*) \leq \Bar{N}(s)/2 ) \\
        &\leq \Pr_{\theta^*}(\Bar{N}(s) < K) + \sum_{L = K}^N \Pr_{\theta^*}(\mathcal{F}_L)\\
        &\leq \exp\left(-\dfrac{\lambda N}{2}\right) + N \exp\left(-\frac{K}{18}\right) \leq \left(1 + e^{1/18} N \right)\exp(-c N)
    \end{align}
    where $c:=\min\{\frac{\lambda}{2}, \frac{\lambda}{36\Tilde{\lambda}} \} $.

    Note that $c$ is a function of $(S, A, \vbar, \Delta, \delta, \beta)$, in particular, it is independent of $\theta^*$. Therefore we conclude that
    \begin{align}
        \Pr(\hat{\pi}^*\neq \pi^*) \leq \sum_{s\in\cS} \Pr(\hat{\pi}^*(s)\neq \pi^*(s)) \leq S(1 + 1.06 N) \exp(-c N)
    \end{align}

\end{document}